\documentclass[twoside]{article}

%
\usepackage[accepted]{aistats2021}
%


\setlength{\pdfpageheight}{11in}
\setlength{\pdfpagewidth}{8.5in}

\usepackage[round]{natbib}


\usepackage[T1]{fontenc}
\usepackage{times}
\usepackage{soul}
\usepackage{url}
\usepackage[utf8]{inputenc}
\usepackage[small]{caption}
\usepackage{graphicx}
\usepackage{algorithmic}
\usepackage{float}
\usepackage{amsfonts}
\usepackage{amsmath}
\usepackage{amssymb}
\usepackage{amsthm}
\usepackage{bbm}
\usepackage{booktabs}
\usepackage{enumitem}
\usepackage[ruled]{algorithm2e}
\usepackage[belowskip=-5pt,aboveskip=5pt]{caption}
\usepackage{wrapfig}
\usepackage{bm}
\usepackage{dirtytalk}
\usepackage{thmtools}

\usepackage[usenames,dvipsnames]{xcolor}
\usepackage[bookmarks=false]{hyperref}
\hypersetup{
  pdffitwindow=true,
  pdfstartview={FitH},
  pdfnewwindow=true,
  colorlinks,
  linktocpage=true,
  linkcolor=Green,
  urlcolor=Green,
  citecolor=Green
}
\usepackage[capitalize,noabbrev]{cleveref}

\usepackage[backgroundcolor=white]{todonotes}

\usepackage{tikz}
\usetikzlibrary{bayesnet}
\urlstyle{same}

\newcommand{\E}[2]{\mathbb{E}_{#1} \left[#2\right]}

\newcommand{\abs}[1]{\left|#1\right|}

\newcommand{\prob}[1]{\mathbb{P} \left(#1\right)}

\newcommand{\probx}{P^\mathsf{x}}
\newcommand{\probr}{P^\mathsf{r}}
\newcommand{\indicator}[1]{\mathbbm{1}\left[#1\right]}

\DeclareMathOperator*{\argmax}{arg\,max\,}

\newcommand{\cX}{\mathcal{X}}
\newcommand{\cA}{\mathcal{A}}
\newcommand{\cD}{\mathcal{D}}
\newcommand{\cH}{\mathcal{H}}

\newcommand{\cZ}{\mathcal{Z}}

\newcommand{\hatV}{\hat{V}}




\newtheorem{assumption}{Assumption}

\newtheorem{lemma}{Lemma}

\newtheorem{corollary}{Corollary}

\sloppy
\frenchspacing

\begin{document}
%

%

\twocolumn[

\aistatstitle{Non-Stationary Off-Policy Optimization}

\aistatsauthor{Joey Hong \And Branislav Kveton \And  Manzil Zaheer \And Yinlam Chow \And Amr Ahmed }

\aistatsaddress{Google Research}]

\begin{abstract}
Off-policy learning is a framework for evaluating and optimizing policies without deploying them, from data collected by another policy. Real-world environments are typically non-stationary and the offline learned policies should adapt to these changes. To address this challenge, we study the novel problem of off-policy optimization in piecewise-stationary contextual bandits. Our proposed solution has two phases. In the offline learning phase, we partition logged data into categorical latent states and learn a near-optimal sub-policy for each state. In the online deployment phase, we adaptively switch between the learned sub-policies based on their performance. This approach is practical and analyzable, and we provide guarantees on both the quality of off-policy optimization and the regret during online deployment. To show the effectiveness of our approach, we compare it to state-of-the-art baselines on both synthetic and real-world datasets. Our approach outperforms methods that act only on observed context.
\end{abstract}

\section{Introduction}
\label{sec:introduction}

When users interact with online platforms, such as search engines or recommender systems, their behavior is often guided by certain contexts that the system cannot directly observe. Examples of these contexts include \emph{user preferences}, or in shorter term, \emph{user intent}. As the user interacts with the system, these contexts are slowly revealed based on the actions and responses of the user. A good recommender system should be able to utilize these contexts to update the recommendation actions accordingly.

One popular framework to learn recommendation actions conditioned on contexts is using contextual bandits \citep{bandit_book}. In contextual bandits, an agent (or policy) chooses an action based on current contexts and the feedback observed in previous rounds. Contextual bandits have been applied to many core machine learning systems, including search engines, recommender systems, and ad placement \citep{yahoo_contextual,msft_paper}.

Contextual bandit algorithms are either \textit{on-policy}, where the agent learns online from real-world interactions \citep{greedy_side_info,improved_linucb}, or \textit{off-policy}, where the learning process uses offline logged data collected by other policies \citep{log_learning,yahoo_contextual}. While the former is more straightforward, the latter is more suitable for applications where sub-optimal interactions are costly and may lead to costly outcomes.

Most existing contextual bandit algorithms assume that rewards are sampled from a stationary conditional distribution. While this is a valid assumption in simpler problems, where the user intents remain static during interactions, in general the environment should be non-stationary, where user preferences may change during the interactions due to some unexpected events. These shifts in the environment can either be smooth \citep{nonstationary_mab} or abrupt at certain points in time \citep{piecewse_stationary}. Here we mainly focus on the latter case, known as the \textit{piecewise-stationary} environment \citep{piecewse_stationary,sliding_window_ucb}, which is applicable to many event-sensitive decision-making problems.

Non-stationary bandits \citep{exp3_s,nonstationary_contextual_colt}, and more specifically piecewise-stationary bandits \citep{piecewse_stationary,sliding_window_ucb,wmd}, have been studied extensively in the on-policy setting. The prior work in non-stationary off-policy learning only considered policy evaluation, where the evolution of contexts is modeled using time series \citep{arima_ope} or by weighting past observations \citep{jagerman}. Neither of these works considered policy optimization.

In this work, we develop a principled off-policy method to learn a piecewise-stationary contextual bandit policy with performance guarantees. Our algorithm consists of both the offline and online learning phases. In the offline phase, the piecewise-stationarity is modeled with a categorical latent state, whose evolution is either modeled by a change-point detector \citep{cusum,m_ucb} or a hidden Markov model (HMM) \citep{hmm}. At each latent state, a corresponding policy is learned from a subset of offline data associated with that state. With the set of policies learned offline, the online phase then selects which policy to deploy based on a mixture-of-experts \citep{exp3_s,nonstationary_contextual_colt} online learning approach. We derive high-probability bounds on the off-policy performance of the learned policies and also analyze the regret of the online policy deployment. Finally, the effectiveness of our approach is demonstrated in both synthetic and real-world experiments, where we outperform existing off-policy contextual bandit baselines. We address two novel challenges. First, we are the first to consider the bias in off-policy estimation due to an unknown latent state. Second, it is nontrivial to deploy a non-stationary policy learned offline. We are the first to propose a framework for learning the components of a switching policy offline, and then augment them with an adaptive switching algorithm online.

\section{Background}
\label{sec:background}

Let $\cX$ be a set of contexts and $\cA = [K]$ be a set of actions. A typical contextual bandit setting consists of an agent interacting with a stationary environment over $T$ rounds. In round $t \in [T]$, context $x_t \in \cX$ is sampled from an unknown distribution $\probx$. Then, conditioned on $x_t$, the agent chooses an action $a_t \in \cA$. Finally, conditioned on $x_t$ and $a_t$, a reward $r_t \in [0, 1]$ is sampled from an unknown distribution $\probr(\cdot \mid x_t, a_t)$.

Let $\cH = \{ \pi: \cX \to \Delta^{K - 1}\}$ be the set of \emph{stochastic stationary policies}, where $\Delta^{K - 1}$ is the $(K - 1)$-dimensional simplex. We use shorthand $x, a, r \sim P, \pi$ to denote a triplet sampled as $x \sim \probx, a \sim \pi(\cdot \mid x)$, and $r \sim \probr(\cdot \mid x, a)$. We define
\begin{align*}
  \E{x, a, r \sim P, \pi}{r}
  = \mathbb{E}_{x \sim \probx} \mathbb{E}_{a \sim \pi(\cdot \mid x)}
  \E{r \sim \probr(\cdot \mid x, a)}{r}\,.
\end{align*}
With this notation, the expected reward of policy $\pi \in \cH$ in round $t$ can be written
\begin{align*}
  V_t(\pi)
  = \E{x_t, a_t, r_t \sim P, \pi}{r_t}\,.
\end{align*}
Traditionally, $V_t(\pi)$ is the same for all rounds $t$.

In off-policy learning, actions are chosen by a known stationary logging policy $\pi_0 \in \cH$. Logged data are collected in the form of tuples
$$
  \cD
  = \{(x_1, a_1, r_1, p_1), \hdots, (x_T, a_T, r_T, p_T)\}\,,
$$
where $x_t, a_t, r_t \sim P, \pi_0$ and $p_t = \pi_0(a_t \mid x_t)$  is the probability that the logging policy takes action $a_t$ under context $x_t$. For simplicity, we assume that $\pi_0$ is known. Note that if the logging policy is not known, a stationary $\pi_0$ can be estimated from logged data to approximate the true logging policy \citep{log_learning,surrogate_policy,topk_off_policy}. Off-policy learning focuses on two tasks: evaluation and optimization.

\subsection{Off-Policy Evaluation}

The goal is to estimate the expected reward of a target policy $\pi \in \mathcal{H}$, $V(\pi) = \sum_{t = 1}^T V_t(\pi)$, from logged data $\cD$. One popular approach is \emph{inverse propensity scoring (IPS)} \citep{ips}, which reweighs observations with importance weights as
\begin{align*}
  \hatV(\pi)
  = \sum_{t=1}^T \min \left\{ M, \frac{\pi(a_t \mid x_t)}{p_t} \right\} r_t\,,
\end{align*}
where $M$ is a tunable \emph{clipping parameter}. When $M = \infty$, the IPS estimator is unbiased, that is $\mathbb{E}[\hat{V}(\pi)] = V(\pi)$. But its variance could be unbounded if the target and logging policies differ substantially. The clipping parameter $M$ trades off variance due to differences in target and logging policies for bias from underestimating the reward \citep{clipping,msft_paper}. There are methods to design the clipping weight to optimize such trade-offs~\citep{dr,switch}. While we focus on the IPS estimator, our work can be incorporated into other estimators, such as the \emph{direct method (DM)} and \emph{doubly robust (DR)} estimator \citep{dr}, which leverage a reward model $\hat{r}(x, a) \simeq \E{r \sim \probr(\cdot \mid x, a)}{r}$, where $\simeq$ denotes an approximation by fitting on $\cD$.

\subsection{Off-Policy Optimization}

Our goal is to learn a policy with the highest expected reward, $\pi^* = \argmax_{\pi \in \mathcal{H}} V(\pi)$. One popular solution is to maximize the IPS estimate, $\hat{\pi} = \argmax_{\pi \in \cH} \hat{V}(\pi)$~\citep{surrogate_objective}. For stochastic policies, one often optimizes an entropy-regularized estimate ~\citep{surrogate_objective},
\begin{align*}
\hat{\pi} \!=\! \argmax_{\pi \in \cH} \hatV{}(\pi) \!-\! \tau \sum_{t=1}^T \!\sum_{a \in \cA} \pi(a \!\mid\! x_t) \log \pi(a \!\mid\! x_t)\,,
\end{align*}
where $\tau \geq 0$ is the \emph{temperature} parameter that controls the determinism of the learned policy. That is, as $\tau \to 0$, the policy chooses the maximum. Following prior work \citep{poem,norm_poem}, one class of policies that solves this entropy-regularized objective is the linear soft categorical policy $\pi(a \mid x; \theta) \propto \exp(\theta^T f(x, a))$, where $\theta \in \mathbb{R}^d$ is the weight of the linear function approximation w.r.t. the joint feature maps of context and action $f(x, a) \in \mathbb{R}^d$. In the special case of $\cX$ being finite, $f(x, a)$ can be an indicator vector for each pair $(x, a)$, and solving $\hat{\pi}$ reduces to an LP \citep{ranking}.

\section{Setting}
\label{sec:setting}

In non-stationary bandits, the context and reward distributions change with round $t$. 
To model this, we consider an extended contextual bandit setting where the context and reward distributions also depend on a \emph{discrete latent state} $z \in \cZ$, where $\cZ = [L]$ is the set of $L$ latent states. We denote by $z_t \in \cZ$ the latent state in round $t$, and by $z_{1:T} = (z_t)_{t=1}^T \in \cZ^T$ its sequence over the logged data. We consider $z_{1:T}$ to be fixed but unknown. For analysis, we assume that $L$ is known, but relax this assumption and tune $L$ in the experiments. We also assume that the latent state is unaffected by the actions of the agent, a key difference from reinforcement learning (RL). In search engines, for instance, latent states could be different user intents that change over time, such as $\mathcal{Z} = \{\text{news}, \text{shopping}, \hdots\}$.

We can modify our earlier notation to account for the latent state. Let $\probx_z$ and $\probr_z$ be the corresponding context and reward distributions conditioned on $z$. Then the expected reward of policy $\pi$ at round $t$ is $V_t(\pi) = \E{x, a, r \sim P_{z_t}, \pi}{r}$. The relation between all variables can be summarized in a graphical model in \cref{fig:graph}. Revisiting our search engine example, if a system knew that the user shops, it would likely recommend products to buy. So, instead of policies that only act on observed context, we should consider policies that also act according to the latent state. Therefore, we define a new class of policies $\cH^\cZ$, whose members are $\Pi=(\pi_z)_{z \in \cZ}$, and $\pi_z \in \cH$ are individual stationary policies. We define the value of $\Pi$ as 
\begin{align}
\begin{split}
V(\Pi) &= \sum_{z \in \mathcal{Z}} V_z(\pi_z)\,, \\
V_z(\pi_z) &= \sum_{t = 1}^T \indicator{z_t = z}V_t(\pi_z)\,,
\end{split}
\label{eqn:v}
\end{align}
where the latter is the value of $\pi_z$ on the subset of logged data with latent state $z$. Note that calculating $V(\Pi)$ requires knowing $z_{1:T}$; therefore, this quantity is hard to compute in practice, but can still be used to reason about performance.

Prior works on non-stationary bandits either studied environments with \emph{smooth changes} \citep{nonstationary_mab}, or \emph{piecewise-stationary} environments, where the changes are abrupt at a fixed number of unknown \textit{change-points} \citep{piecewse_stationary,sliding_window_ucb}. In this work, we focus on the latter environment. In a piecewise-stationary environment, we additionally denote by $S$ the number of stationary segments in $z_{1:T}$, where the latent state is constant over a segment. We assume that $S \geq L$, as multiple segments can map to the same latent state, and that $S$ is small. We denote the change-points by 
\begin{align}
1 < \tau_1 < \hdots < \tau_{S-1} < T = \tau_S\,,
\label{eqn:changepoints}
\end{align}
where we let $\tau_{S} = T$ to simplify exposition. 

\begin{figure}
\begin{center}
\begin{tikzpicture}[x=1.2cm,y=0.5cm]
  \node[obs]                 (r) {$r_t$} ; %
  \node[obs, above= of r]    (x) {$x_t$} ; %
  \node[det, right= of x]    (a) {$a_t$} ; %
      
  \node[latent, left=of x] (z) {$z_t$} ; %

  \edge {x} {a};
  \edge {x} {r};
  \edge {a} {r};
  \edge {z} {x};
  \edge {z} {r};

\end{tikzpicture}
\end{center} 
\caption{Graphical model for latent state $z_t$, context $x_t$, action $a_t$, and reward $r_t$.}
\label{fig:graph}
\vspace{-0.1in}
\end{figure}
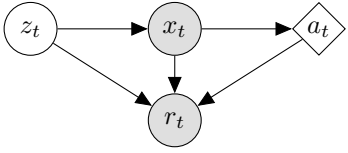

\section{Off-Policy Evaluation}
\label{sec:evaluation}

To extend off-policy learning to the piecewise-stationary latent setting, we consider an IPS estimator
for $\Pi \in \cH^Z$
\begin{align}
  \hatV(\Pi)
  & = \sum_{z \in \mathcal{Z}} \hatV_z(\pi_z)\,,
  \label{eqn:z_ips} \\
  \hatV_z(\pi_z)
  & = \sum_{t = 1}^T \indicator{\hat{z}_t = z}\cdot
  \min \left\{M, \frac{\pi_z(a_t \mid x_t)}{p_t}\right\} r_t\,,
  \nonumber
\end{align} 
where $\hatV_z(\pi_z)$ is the IPS estimator for the logged data with latent state $z$ and $\hat{z}_{1:T}$ is a sequence of latent states predicted by an \emph{oracle} $O$. This estimator partitions the logged data by latent state. 

For simplicity, we restrict our performance analysis to a set of policies where the clipping condition is always satisfied,
\begin{align}
  \cH
  = \left\{\pi: \frac{\pi(a \mid x)}{\pi_0(a \mid x)} \leq M
  \textrm{ for all } a \in \cA, x \in \cX\right\}\,,
  \label{eq:clipped policies}
\end{align}
so that the propensity score does not needed to be clipped. The analysis can be straightforwardly extended to a general policy class, and this only adds an extra bias term to the error bound \citep{clipping,ranking}. We omit this for the sake of brevity.

If the oracle accurately predicts all the ground-truth latent states, i.e., $\hat{z}_{1 : T} = z_{1 : T}$, and if $M=\infty$, then the following lemma shows that the IPS estimator $\hatV(\pi)$ is unbiased.

\begin{lemma}
For any $\Pi \in \cH^Z$, the IPS estimator $\hat{V}(\Pi)$ in \eqref{eqn:z_ips} is unbiased
when $\hat{z}_{1 : T} = z_{1 : T}$.
\label{lem:unbiased_z_ips}
\end{lemma}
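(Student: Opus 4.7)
The plan is to reduce the double-indexed estimator back to a standard per-round IPS estimator, and then apply the usual change-of-measure argument pointwise in $t$.

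First I would swap the order of the two sums and exploit $\hat{z}_{1:T} = z_{1:T}$. Writing
\begin{align*}
\hatV(\Pi)
= \sum_{z \in \cZ} \sum_{t=1}^{T} \indicator{z_t = z}
  \min\!\left\{M, \tfrac{\pi_z(a_t \mid x_t)}{p_t}\right\} r_t
\end{align*}
and noting that for each $t$ exactly one $z$ satisfies $z_t = z$, the inner sum collapses to $\min\{M, \pi_{z_t}(a_t \mid x_t)/p_t\} r_t$. Next I would use the restriction $\Pi \in \cH^{\cZ}$: by the definition of $\cH$ in (\ref{eq:clipped policies}), $\pi_{z_t}(a_t \mid x_t)/\pi_0(a_t \mid x_t) \le M$ almost surely, so the clipping is inactive and the summand simplifies to $(\pi_{z_t}(a_t \mid x_t)/p_t)\, r_t$.

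Now I would take expectation round by round. Since $z_{1:T}$ is fixed and $x_t, a_t, r_t \sim P_{z_t}, \pi_0$, the standard IPS change-of-measure gives
\begin{align*}
\E{x_t, a_t, r_t \sim P_{z_t}, \pi_0}{\tfrac{\pi_{z_t}(a_t \mid x_t)}{\pi_0(a_t \mid x_t)} r_t}
= \E{x_t, a_t, r_t \sim P_{z_t}, \pi_{z_t}}{r_t}
= V_t(\pi_{z_t})\,,
\end{align*}
by first conditioning on $x_t$, then re-weighting the action expectation from $\pi_0$ to $\pi_{z_t}$, and finally using that the reward distribution $\probr_{z_t}(\cdot \mid x_t, a_t)$ does not depend on which policy produced $a_t$.

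Summing over $t$ and re-introducing the partition by latent state,
\begin{align*}
\mathbb{E}[\hatV(\Pi)]
= \sum_{t=1}^{T} V_t(\pi_{z_t})
= \sum_{z \in \cZ} \sum_{t=1}^{T} \indicator{z_t = z} V_t(\pi_z)
= V(\Pi)\,,
\end{align*}
which is the claim. There is no real obstacle here; the only thing to be careful about is invoking $\Pi \in \cH^{\cZ}$ to eliminate the $\min\{M,\cdot\}$ so that the usual unbiasedness argument applies verbatim, and keeping track that the per-round expectation is with respect to $P_{z_t}$ (not an averaged distribution), which is exactly what makes the latent-state-wise IPS estimator legitimate once $\hat z_{1:T} = z_{1:T}$.
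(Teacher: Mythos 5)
Your proof is correct and follows essentially the same route as the paper's: collapse the latent-state sum using $\hat z_{1:T}=z_{1:T}$, drop the clipping via the restriction to the policy class in \eqref{eq:clipped policies}, and apply the standard per-round IPS change of measure under $P_{z_t},\pi_0$ before summing. You spell out the collapse of the double sum and the inactivity of the $\min\{M,\cdot\}$ more explicitly than the paper does, but the argument is the same.
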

\begin{proof}
From definition of $\hat{V}(\Pi)$ in \eqref{eqn:z_ips}, we have
\begin{align*}
  V(\Pi)
  & = \sum_{t=1}^T V_t(\pi_{z_t}) \\
  & = \sum_{t = 1}^T \E{x_t, a_t, r_t \sim P_{z_t}, \pi_0}
  {\frac{\pi_{z_t}(a_t \mid x_t)}{p_t} r_t} \\
  & = \E{}{\sum_{t = 1}^T \frac{\pi_{z_t}(a_t \mid x_t)}{p_t} r_t}
  = \E{}{\hat{V}(\Pi)}\,,
\end{align*}
where the last expectation is over all $x_t, a_t, r_t \sim P_{z_t}, \pi_0$, for any $t \in [T]$.
\end{proof}

While the above technical result justifies our choice of the IPS estimator for piecewise-stationary environments, in reality there is no practical way to ensure a perfect latent state estimation because the latent states $z_{1:T}$ are not observed in logged data $\cD$. To address this challenge, in the following we assume that the latent state oracle $O$ has a low prediction error with high probability and show how this error propagates into off-policy value estimation.

\begin{assumption}
\label{ass:oracle} For any $z_{1 : T}$ and $\delta \in (0, 1]$, oracle $O$ estimates $\hat{z}_{1 : T}$ such that $\sum_{t = 1}^T \mathbbm{1}[\hat{z}_t \neq z_t] \leq \varepsilon(T, \delta)$ holds with probability at least $1 - \delta$, where $\varepsilon(T, \delta) = o(T)$ is some function of $T$ and $\delta$.
\end{assumption}

Now, consider when a latent state prediction is generated by an oracle $O$ that satisfies \cref{ass:oracle}. Using this oracle, we can provide an upper bound on the estimation error (whose proof is in \cref{sec:proof_offline}) of the IPS estimator in \eqref{eqn:z_ips}.

\begin{restatable}[]{lemma}{evalmain}
\label{lem:eval_main} For any policy $\Pi \in \cH^Z$, its IPS estimate $\hatV(\Pi)$ in \eqref{eqn:z_ips}, and true value $V(\Pi)$, we have that
\begin{align*}
 |V(\Pi) - \hatV(\Pi)|
 \leq M \varepsilon(T, \delta_1/2) + M\sqrt{2T \log(4/\delta_2)}
\end{align*}
holds with probability at least $1 - \delta_1 - \delta_2$.
\end{restatable}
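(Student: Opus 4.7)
The plan is to decompose the total estimation error into two sources: the error incurred because the oracle sometimes mispredicts the latent state, and the standard IPS concentration error that would remain even under a perfect oracle. To this end, I would first introduce the auxiliary ``oracle-perfect'' estimator
\begin{align*}
  \tilde V(\Pi)
  = \sum_{t=1}^T \min\!\left\{M, \frac{\pi_{z_t}(a_t \mid x_t)}{p_t}\right\} r_t,
\end{align*}
which is exactly $\hat V(\Pi)$ from \eqref{eqn:z_ips} with the predicted states $\hat z_{1:T}$ replaced by the true states $z_{1:T}$. A triangle inequality then gives
\begin{align*}
  |V(\Pi) - \hat V(\Pi)|
  \leq |V(\Pi) - \tilde V(\Pi)| + |\tilde V(\Pi) - \hat V(\Pi)|,
\end{align*}
and I would bound the two terms separately.

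For the second term, the summands of $\tilde V(\Pi)$ and $\hat V(\Pi)$ agree on every round $t$ with $\hat z_t = z_t$, so the difference telescopes to at most $\sum_{t=1}^T \mathbbm{1}[\hat z_t \neq z_t]$ rounds. Each individual summand is in $[0,M]$ because the importance weight is clipped at $M$ and $r_t \in [0,1]$, so that $|\tilde V(\Pi) - \hat V(\Pi)| \leq M \sum_{t=1}^T \mathbbm{1}[\hat z_t \neq z_t]$. Invoking \cref{ass:oracle} with confidence $\delta_1/2$ (the factor $1/2$ is absorbed inside the union bound later) yields $|\tilde V(\Pi) - \hat V(\Pi)| \leq M\,\varepsilon(T,\delta_1/2)$ with probability at least $1 - \delta_1$.

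For the first term, \cref{lem:unbiased_z_ips} already shows that $\mathbb{E}[\tilde V(\Pi)] = V(\Pi)$ when the true latent states are used. Since the per-round summands are independent (given $z_{1:T}$, which is treated as fixed) and each lies in $[0,M]$, Hoeffding's inequality applied to the sum of $T$ bounded random variables yields
\begin{align*}
  |V(\Pi) - \tilde V(\Pi)| \leq M\sqrt{2T\log(4/\delta_2)}
\end{align*}
with probability at least $1-\delta_2$. A final union bound over the two high-probability events combines the pieces and produces the stated bound with probability at least $1 - \delta_1 - \delta_2$.

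\paragraph{Main obstacle.}
None of the steps require heavy machinery, and I expect the main care to be bookkeeping: making sure the clipping condition in \eqref{eq:clipped policies} is used to uniformly bound the summands by $M$ so that Hoeffding applies with range $M$, and making sure the independence across rounds is used correctly (the latent sequence $z_{1:T}$ is fixed, so the only randomness across $t$ comes from $x_t, a_t, r_t \sim P_{z_t}, \pi_0$, which is independent across $t$). The mild subtlety is tracking how the oracle's $\delta$ parameter gets set inside the union bound so that the final probability is exactly $1 - \delta_1 - \delta_2$ as stated.
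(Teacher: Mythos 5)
Your proposal matches the paper's proof essentially step for step: the same auxiliary oracle-perfect estimator $\tilde V$, the same triangle-inequality decomposition, and the same bound of $M\sum_{t=1}^T \mathbbm{1}[\hat z_t \neq z_t]$ on the oracle-error term via \cref{ass:oracle}. The only (immaterial) difference is that you invoke Hoeffding's inequality on the independent per-round summands where the paper constructs a martingale and applies Azuma's inequality; both yield the stated $M\sqrt{2T\log(4/\delta_2)}$ term, and your independence claim is legitimate here because $z_{1:T}$ is fixed and the triples $(x_t,a_t,r_t)$ are drawn independently across rounds.
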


This technical lemma shows that in a piecewise-stationary environment, the error of the IPS estimator can be decomposed into the latent oracle prediction error and a statistical error term that is sublinear in $T$. In the rest of this section, we introduce two latent prediction oracles. The first one is based on change-point detection, and we show that it satisfies \cref{ass:oracle}. The second one is based on a \emph{hidden Markov model} (HMM), for which we do not prove an error bound but get better empirical performance.

\subsection{Change-Point Detector}
\label{sec:change_point}

In this section, we propose and analyze a change-point detector oracle that satisfies \cref{ass:oracle}. First, we assume a one-to-one mapping between stationary segments and latent states, or $S = L$. We let $z_{1:T}$ form a non-decreasing sequence of integers that satisfies $z_1 = 1$, $z_T = S$ with $\abs{z_{t + 1} - z_t} \leq 1$, $\forall t \in [T - 1]$, and change-points in \eqref{eqn:changepoints}.
In practice, this could over-segment the offline data, if multiple stationary segments can be modeled by the same latent state. However, this assumption is only used in the analysis.

We also assume that changes are \emph{detectable}. This means that the difference in performance of a stationary logging policy before and after the change-point exceeds some threshold.

\begin{assumption}
For each segment $i \in [S]$ there exists a threshold $\Delta > 0$ such that the difference of values between two consecutive change points is greater than $\Delta$, i.e. $\abs{V_{\tau_i}(\pi_0) - V_{\tau_i - 1}(\pi_0)} \geq \Delta$.
\end{assumption}

Similar assumptions are common in piecewise-stationary bandits, where the state-of-the-art algorithms~\citep{cusum,m_ucb} use an online change-point detector to detect change points and reset the parameters of the bandit algorithm upon a change. In this work, we utilize a similar idea but in an offline off-policy setting. We construct a change-point detector oracle $O$ with window size $w$ and detection threshold $c$ (\cref{alg:oracle}).

\begin{algorithm}[t]
\DontPrintSemicolon
\KwIn{window size $w \in \mathbb{N}$, detection threshold $c \in \mathbb{R}^{+}$, and logged data $\cD$}
\BlankLine
\For{$t \gets w + 1$ \KwTo $T - w + 1$}{%
    $\mu_t^- \gets w^{-1} \sum_{i = t-w}^{t - 1} r_i$ \\
    $\mu_t^+ \gets w^{-1} \sum_{i = t}^{t + w - 1} r_i$
}
Initialize candidates $C \gets \{t \in [T]: \abs{\mu_t^- - \mu_t^+} \geq c\}$ and change-points $\Gamma = \emptyset$\\
\While{$C \neq \emptyset$}{%
  Find change-point
  $\hat{\tau} \gets \argmax_{t \in C}\{\abs{\mu_t^- - \mu_t^+}\}$ \\
  $C \gets C \setminus [\hat{\tau} - 2w, \hat{\tau} + 2w]$ \\
  $\Gamma \gets \Gamma \cup \{\hat{\tau}\}$
}
Order all elements in $\Gamma$ as $1 < \hat{\tau}_1 < \hdots < \hat{\tau}_{S' - 1} < T = \hat{\tau}_{S'}$, where $S' = |\Gamma| + 1$. \\
\For{$t \gets 1$ \KwTo $T$}{%
$\hat{z}_t \gets \min \{i \in [S']: t \leq \hat{\tau}_i\}$
}
\caption{Change-point detector oracle}
\label{alg:oracle}
\end{algorithm}

At a high level, $O$ computes difference statistics for each round in the offline data. Then it iteratively selects the round with the highest statistic, declares it a change-point, and removes any nearby rounds from consideration. This continues until there is no statistic that lies above threshold $c$. In the following, we state a latent prediction error bound for this oracle, which is derived in \cref{sec:proof_cd}.

\begin{restatable}[]{theorem}{cdoracle}
\label{thm:cd_oracle} Let $\tau_i - \tau_{i - 1} > 4w$ for all $i \in [S]$. Then for any $\delta \in (0, 1]$, and $c$ and $w$ in \cref{alg:oracle} such that
$$
  \Delta / 2
  \geq c
  \geq \sqrt{2 \log(8 T / \delta) / w}\,,
$$
\cref{alg:oracle} estimates $\hat{z}_{1:T}$ so that $\sum_{t = 1}^T \mathbbm{1}[\hat{z}_t \neq z_t] \leq S w$ holds with probability at least $1 - \delta$.
\end{restatable}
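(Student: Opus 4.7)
The plan is to (i) use Hoeffding's inequality to establish a high-probability good event on which every empirical window average is close to its mean; (ii) use this event together with the detection gap $\Delta \geq 2c$ to show that the candidate set $C$ lies close to the true change-points and contains each one of them; (iii) analyze the greedy peeling loop in \cref{alg:oracle} to show it produces exactly one estimated change-point within distance $w$ of each true one; and (iv) translate this into the bound $Sw$ on mislabeled rounds.

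For step (i), since the rewards lie in $[0,1]$ and are independent, Hoeffding's inequality gives $\prob{\abs{\mu_t^- - \E{}{\mu_t^-}} \geq c/2} \leq 2 \exp(-w c^2 / 2)$ and likewise for $\mu_t^+$. Union-bounding over the $O(T)$ window averages and using the assumed lower bound $c \geq \sqrt{2 \log(8T/\delta)/w}$ yields, with probability at least $1 - \delta$,
\[
  \abs{(\mu_t^- - \mu_t^+) - \E{}{\mu_t^- - \mu_t^+}} \leq c
\]
simultaneously for every $t$. I would condition on this good event.

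For step (ii), if $t$ is at distance at least $w$ from every true change-point, both windows of $t$ lie entirely in one stationary segment, so $\E{}{\mu_t^- - \mu_t^+} = 0$ and hence $\abs{\mu_t^- - \mu_t^+} \leq c$, placing $t \notin C$. At $t = \tau_i$, the spacing assumption $\tau_i - \tau_{i-1} > 4w$ makes the two windows purely previous-segment and purely new-segment respectively, so $\abs{\E{}{\mu_{\tau_i}^- - \mu_{\tau_i}^+}} = \abs{V_{\tau_i - 1}(\pi_0) - V_{\tau_i}(\pi_0)} \geq \Delta \geq 2c$, and concentration forces $\abs{\mu_{\tau_i}^- - \mu_{\tau_i}^+} \geq c$, i.e., $\tau_i \in C$. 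Hence $C \subseteq \bigcup_i [\tau_i - w + 1, \tau_i + w - 1]$ and contains each $\tau_i$. For step (iii), each iteration of the while-loop picks the maximizer $\hat{\tau} \in C$, which by (ii) lies within $w - 1$ of some $\tau_i$; the exclusion interval $[\hat{\tau} - 2w, \hat{\tau} + 2w]$ then absorbs every candidate associated with $\tau_i$ (all within $2w$ of $\hat{\tau}$) but cannot reach any other change-point, which sits at distance more than $4w - w = 3w$ from $\hat{\tau}$. Induction on iterations produces, in increasing order, exactly one estimate $\hat{\tau}_j$ per true change-point with $\abs{\hat{\tau}_j - \tau_i} \leq w - 1$.

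For step (iv), on each matched pair the labels $\hat{z}_t$ and $z_t$ can disagree only on the rounds strictly between $\hat{\tau}_j$ and $\tau_i$, which is at most $w$ rounds; summing over the $S - 1$ true change-points gives $\sum_{t=1}^T \indicator{\hat{z}_t \neq z_t} \leq (S-1) w \leq S w$ on the good event, which concludes the proof. The main obstacle I expect is the bookkeeping in step (iii): one must argue simultaneously that every $\tau_i$ is matched by some $\hat{\tau}_j$, that no spurious detections slip through the interior of a segment (handled by the ``$t \notin C$'' half of (ii)), and that the order of the $\hat{\tau}_j$'s agrees with the order of the $\tau_i$'s so that the final labeling rule $\hat{z}_t = \min\{i : t \leq \hat{\tau}_i\}$ assigns consistent segment indices. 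The $4w$ separation is essentially tight precisely because the exclusion radius is $2w$ and candidates can sit up to $w - 1$ away from the underlying change-point.
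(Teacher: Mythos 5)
Your proof is correct and follows essentially the same route as the paper's: Hoeffding concentration on the sliding-window means, the gap condition $\Delta \geq 2c$ to guarantee detection at each true change-point and rule out detections far from them, a union bound over rounds driving the failure probability below $\delta$ via $c \geq \sqrt{2\log(8T/\delta)/w}$, and the $2w$ exclusion radius combined with the $\tau_i - \tau_{i-1} > 4w$ spacing to ensure exactly one detection within $w$ of each change-point, yielding at most $Sw$ mislabeled rounds. The only difference is organizational (you condition on one global good event, whereas the paper splits the argument into a false-positive lemma and a missed-detection lemma), and your bookkeeping for the greedy peeling loop is in fact more explicit than the paper's.
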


\cref{thm:cd_oracle} says that the oracle $O$ can correctly detect, without false positives, change-points within a window $w$ with high probability. Note that both $w$ and $c$ in \cref{thm:cd_oracle} depend on $\Delta$, which may not be known. A lower bound on $\Delta$, which we denote by $\tilde{\Delta}$, would suffice and may be known. We do this to choose $c$ in the experiments in \cref{sec:experiments}.

\subsection{Hidden Markov Model}
\label{sec:graphical}

Another natural way of partitioning the data is using a latent variable model. In this work, we specifically model the temporal evolution of $z_{1:T}$ with a HMM over $\mathcal Z$ \citep{hmm}. Let $\Phi = [\Phi_{i, j}]_{i, j=1}^L$ be the \emph{transition matrix} with $\Phi_{i, j} = P(z_t = j \mid z_{t - 1} = i)$, and $P_0$ be the \emph{initial distribution} over $\mathcal{Z}$ with $P_{0, i} = P(z_1 = i)$. The latent states evolve according to $z_1 \sim P_0$, and $z_{t + 1} \sim \Phi_{z_t, :}$.  Recall from \cref{sec:background} that we have joint feature maps of context and action $f(x, a) \in \mathbb{R}^d$. We assume the rewards are sampled according to the conditional distribution $P(\cdot \mid x, a, z) = \mathcal{N}(\beta_{z}^T f(x, a), \sigma^2)$, where $\beta = (\beta_z)_{z\in \mathcal{Z}}$ are regression weights. Though we use Gaussians, any distribution could be incorporated. Let $\mathcal{M} = \{P_0, \Phi, \beta\}$ be the HMM parameters. The HMM can be estimated through expectation-maximization (EM) \citep{hmm}. 

Oracle $O$ can use the estimated HMM $\hat{\mathcal{M}}$ to predict $\hat{z}_{1:T}$ from \cref{alg:hmm_oracle}. At each round $t$, the oracle estimates forward and backward probabilities 
\begin{align*}
  A_t(z)
  & = P(x_{1:t}, a_{1:t}, r_{1:t}, z_t = z; \hat{\mathcal{M}})\,, \\
  B_t(z)
  & = P(x_{t+1:T}, a_{t+1:T}, r_{t+1:T} \mid z_t = z; \hat{\mathcal{M}})\,,
\end{align*}
and posterior $Q_t(z) = P(z_t = z \mid x_{1:T}, a_{1:T}, r_{1:T}; \hat{\mathcal{M}})$ using forward-backward recursion \citep{hmm}. Then $O$ predicts $\hat{z}_t = \max_{z \in \mathcal{Z}} Q_t(z)$ at each round $t$. Though the described HMM oracle is practical, currently no guarantees similar to \cref{ass:oracle} can be derived. An analysis similar to \cref{thm:cd_oracle} would require parameter recovery guarantees for the HMM, which to our knowledge, do not exist for EM or spectral methods\footnote{Guarantees exist only on the marginal probability of data.} \citep{spectral_hmm}. Nevertheless, the HMM oracle has several appealing properties. First, unlike the change-point detector, the HMM can map multiple stationary segments into a single latent state, which potentially reduces the size of the latent space. Second, the learned reward model $\hat{r}_z(x, a) = \hat{\beta}_z^T f(x, a) \simeq \E{r \sim \probr_z(\cdot \mid x, a)}{r}$ can be incorporated into more advanced off-policy estimators, such as the DR estimator in \cref{sec:background}, and further reduce variance.

\begin{algorithm}[t]
\caption{HMM oracle}\label{alg:hmm_oracle}
\KwIn{estimated HMM parameters $\hat{\mathcal{M}} = \{\hat{P}_0, \hat{\Phi}, \hat{\beta}\}$ and logged data $\cD$
}
\BlankLine
Initialize $A_0(z) \leftarrow \hat{P}_{0, z}, B_T(z) \leftarrow 1$ \\
\For {$z \in \mathcal{Z}$}{
      Compute $A_t(z), B_t(z)$ for all $t = 1, \hdots, T$ by forward-backward recursion
      \begin{align*}
      &A_t(z) \leftarrow
      \sum_{z' \in \cZ} A_{t-1}(z')
      P(z \mid z'; \hat{\Phi}) P(r_t \mid x_t, a_t, z; \hat{\beta}) \\
      &B_t(z) \leftarrow \\
      &\,\sum_{z' \in \cZ} P(z' \mid z; \hat{\Phi}) P(r_{t+1} \mid x_{t+1}, a_{t+1}, z'; \hat{\beta})
      B_{t + 1}(z')
      \end{align*}
     }
\For {$t \gets 1, 2, \hdots, T$}{%
      Compute $Q_t(z) \propto  A_t(z)B_t(z)$ for all $z \in \mathcal{Z}$ and \\
      $\hat{z}_t \leftarrow \arg\max_{z \in \mathcal{Z}} Q_t(z)$
}
\end{algorithm}

\section{Optimization and Deployment}
\label{sec:optimization and deployment}

We propose a piecewise-stationary off-policy optimization algorithm, which has two parts: (i) an offline optimization that solves for the latent-space policy $\hat{\Pi}=(\hat{\pi}_z)_{z \in \mathcal{Z}}$ where $\hat{\pi}_z = \pi(\cdot|\cdot;\hat{\theta}_z) \in \cH$; and (ii) an online sub-policy selection procedure. We also derive a lower bound on the reward of the policy from offline optimization and an upper bound on the regret of its online deployment.

\begin{algorithm}[t]
\algsetup{linenosize=\tiny}
\DontPrintSemicolon
\KwIn{number of latent states $L \in \mathbb{N}$, 
logged data $\cD$, and oracle $O$
}
\BlankLine
Run $O$ on $\cD$ to get latent state estimates $\hat{z}_{1:T} \in [L]^T$ \\
\For{$z \gets 1$ \KwTo $L$}{%
Solve for 

$\hat{\pi}_z = \argmax_{\pi \in \cH} \sum_{t = 1}^T \indicator{\hat{z}_t = z} \hat{V}_t(\pi)$ \\
}
\caption{Piecewise off-policy learning}
\label{alg:policy_learning}
\end{algorithm}

\begin{algorithm}[t]
\algsetup{linenosize=\tiny}
\DontPrintSemicolon
\KwIn{learned policy $\hat{\Pi} \in \cH^{\mathcal{Z}}$ and
mixture-of-experts algorithm $\mathcal{E}$
}
\BlankLine
Initialize algorithm $\mathcal{E}_1$ \\
\For {$t \gets 1$ \KwTo $T$}{%
    Given $x_t$, choose action $a_t \sim \mathcal{E}_t(x_t, \hat{\Pi})$ \\
    Update $\mathcal{E}_{t+1}$ from $\mathcal{E}_t$ with reward $r_t \sim \probr_{z_t}(\cdot \mid x_t, a_t)$
}
\caption{Piecewise policy deployment}
\label{alg:policy_deployment}
\end{algorithm}

\subsection{Off-Policy Optimization}
\label{sec:optimization}

For optimization, we leverage the fact that logged data are partitioned into $L$ sub-datasets, each corresponding to a particular latent state, which gives the IPS estimator $\hatV(\pi)$ in \eqref{eqn:z_ips} a separable structure. In this way, policy optimization can be broken down into learning the best policy at each individual latent state $z$. Formally, each component of $\hat{\Pi}$ is learned by solving the optimization $\hat{\pi}_z = \argmax_{\pi \in \cH} \hatV_z(\pi)$. 

If each sub-policy $\hat{\pi}_z = \pi(\cdot \mid \cdot;\hat{\theta}_z)$ is parameterized by some $\hat{\theta}_z \in \Theta$, where $\Theta$ denotes the space of model parameters, then we solve the following for each latent state $z$
\begin{equation}\label{eqn:train_obj} \hspace{-5pt}
\hat{\theta}_z 
= \argmax_{\theta \in \Theta} \sum_{t = 1}^T \indicator{\hat{z}_t = z} \cdot \min\left\{M, \frac{\pi(a_t \mid x_t; \theta)}{p_t} \right\}r_t \,.
\end{equation}
If $\hat{\pi}_z$ was a linear soft categorical policy, its parameters $\hat{\theta}_z$ could be found as discussed in \cref{sec:background}. Otherwise, following prior work \citep{poem}, we can iteratively solve for each sub-policy using off-the-shelf gradient ascent algorithms. \cref{alg:policy_learning} summarizes our approach to learning $\hat{\Pi}$.

For $\hat{\Pi} = \arg\max_{\Pi \in \cH^{\mathcal{Z}}} \hat{V}(\Pi)$, we now bound from below the expected reward of $\hat{\Pi}$, in terms of any oracle $O$ that satisfies \cref{ass:oracle}. We merely state the result here and defer its derivation to \cref{sec:proof_offline}.

\begin{restatable}[]{theorem}{optmain}
\label{thm:main} Let 
$$\hat{\Pi}
  = \argmax_{\Pi \in \cH^{\mathcal{Z}}}\hat{V}(\Pi)\,, \quad
  \Pi^*
  = \argmax_{\Pi \in \cH^{\mathcal{Z}}} V(\Pi)
$$
be the optimal latent policies w.r.t. the off-policy estimated value and the true value, respectively. Then for any $\delta_1, \delta_2 \in (0, 1]$, we have that
\begin{align*}
  V(\hat{\Pi}) 
  \geq
  V(\Pi^*) - 2M \varepsilon(T, \delta_1/2) - 2M\sqrt{2T \log(4/\delta_2)}
\end{align*}
holds with probability at least $1 - \delta_1 - \delta_2$.
\end{restatable}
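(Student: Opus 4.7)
The approach is the textbook optimization-plus-generalization decomposition, specialized to this piecewise-stationary off-policy setting. First, write
\begin{align*}
V(\Pi^*) - V(\hat\Pi)
&= \bigl[V(\Pi^*) - \hatV(\Pi^*)\bigr] + \bigl[\hatV(\Pi^*) - \hatV(\hat\Pi)\bigr] \\
&\quad + \bigl[\hatV(\hat\Pi) - V(\hat\Pi)\bigr].
\end{align*}
The middle bracket is non-positive by the definition $\hat\Pi = \argmax_{\Pi \in \cH^{\cZ}} \hatV(\Pi)$, so the sub-optimality reduces to bounding $|V(\Pi^*) - \hatV(\Pi^*)| + |\hatV(\hat\Pi) - V(\hat\Pi)|$. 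Each of these is an application of \cref{lem:eval_main}, and summing them explains the factor of $2M$ in the statement.

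The subtlety is that \cref{lem:eval_main} is pointwise in $\Pi$, whereas $\hat\Pi$ is data-dependent through its dependence on $\hatV$. To apply the bound to both $\Pi^*$ and $\hat\Pi$ while preserving the $1 - \delta_1 - \delta_2$ confidence level (rather than paying an extra factor of two via a naive union bound), one re-inspects the proof of \cref{lem:eval_main}. Decomposing $\hatV(\Pi) - V(\Pi) = [\hatV(\Pi) - \tilde V(\Pi)] + [\tilde V(\Pi) - V(\Pi)]$, where $\tilde V$ is the IPS estimator built from the true $z_{1:T}$, the first summand is deterministically at most $M \sum_t \indicator{\hat{z}_t \neq z_t}$ uniformly in $\Pi$, so the single oracle event from \cref{ass:oracle} controls it simultaneously for $\Pi^*$ and $\hat\Pi$ and is shared across the two applications. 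The second summand $\tilde V(\Pi) - V(\Pi)$ is a centered sum of $T$ terms each bounded by $M$ under the clipping restriction \eqref{eq:clipped policies}, so Hoeffding applies: the pointwise version suffices for $\Pi^*$, while a uniform-in-$\Pi$ version over $\cH^{\cZ}$ (enabled by the same clipping bound) covers $\hat\Pi$ without inflating the $\log(4/\delta_2)$ constant.

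The main obstacle is exactly this uniform-in-$\Pi$ handling of the concentration step for $\hat\Pi$: a naive reapplication of \cref{lem:eval_main} to both policies would require a union bound that doubles the failure probability. The clipping assumption \eqref{eq:clipped policies} is what makes the clean route work, since it forces every term in the IPS sum into the bounded interval $[0, M]$ independently of $\Pi$ and thereby allows the concentration step to be lifted to a uniform statement. Once both estimation errors are bounded by $M\varepsilon(T, \delta_1/2) + M\sqrt{2T\log(4/\delta_2)}$, summing the two applications and rearranging gives the claimed inequality.
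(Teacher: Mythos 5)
Your decomposition and the two applications of \cref{lem:eval_main} are exactly the paper's argument: the paper writes $V(\Pi^*) - V(\hat{\Pi}) \leq \bigl[V(\Pi^*) - \hatV(\Pi^*)\bigr] + \bigl[\hatV(\hat{\Pi}) - V(\hat{\Pi})\bigr]$ using the optimality of $\hat{\Pi}$ for $\hatV$, and then invokes \cref{lem:eval_main} on both $\Pi^*$ and $\hat{\Pi}$, so the core of your proof matches. Your probability accounting is also right in spirit, though the mechanism is simpler than you make it: \cref{lem:eval_main} is proved by combining an oracle event of probability $\delta_1/2$ (via \cref{lem:epsilon_oracle_error} with $\delta = \delta_1/2$) with a martingale event of probability $\delta_2/2$, so its stated confidence $1-\delta_1-\delta_2$ already carries a factor-of-two slack. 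The oracle event is indeed shared across both policies, since $|\hatV(\Pi) - \tilde{V}(\Pi)| \leq M\sum_{t}\indicator{\hat{z}_t \neq z_t}$ holds deterministically and uniformly in $\Pi$ on that event; paying $\delta_2/2$ twice for the two concentration events then totals $\delta_1/2 + \delta_2 \leq \delta_1 + \delta_2$. No uniform concentration statement is needed to make the constants come out.

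Where your write-up overreaches is the claim that clipping alone yields a uniform-in-$\Pi$ Hoeffding/Azuma bound over $\cH^{\cZ}$ ``without inflating the $\log(4/\delta_2)$ constant.'' Boundedness of each centered summand in $[-M, M]$ gives a pointwise martingale bound for a \emph{fixed} policy; lifting it to a supremum over an infinite policy class requires a covering or complexity argument and would in general add a complexity-dependent term. You correctly identify that $\hat{\Pi}$ is data-dependent and that a pointwise bound does not formally cover it, but your proposed fix is asserted rather than proved. To be fair, the paper has the same gap: it applies the pointwise \cref{lem:eval_main} directly to the random $\hat{\Pi}$ without comment. So your proof is no weaker than the paper's, but the uniformity step you flag as ``the main obstacle'' is not actually resolved by either argument; a rigorous version would restrict to a finite or suitably parametrized class and pay the corresponding union-bound or covering cost.
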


\cref{thm:main} states that the reward gap of the learned policy $\hat{\Pi}$ from $\Pi^*$ decomposes into the error due to oracle $O$ and randomness of logged data $\cD$. It is important to note that we assume the true latent sequence $z_{1:T}$ is known when measuring the performance of a policy. This is evident in \eqref{eqn:v}, where sub-policy used for round $t$ is $z_t$. We relax this assumption in \cref{sec:deployment}, where the latent state is estimated only from past interactions.

Next we derive a lower bound on expected reward of policy $\hat{\Pi}$ learned by \cref{alg:policy_learning} with change-point detector oracle $O$ in \cref{alg:oracle}.   

\begin{corollary}
\label{cor:main} Fix any $\tilde{\Delta} \leq \Delta$ and $\delta_1, \delta_2 \in (0, 1]$. Let oracle $O$ be Algorithm~\ref{alg:oracle} with
$$
  w 
  = 8 \log(16 T / \delta_1) / \tilde{\Delta}^2\,, \quad 
  c
  = \tilde{\Delta}/2\,,
$$
and $\Pi^\ast$, $\hat{\Pi}$ be defined as in \cref{thm:main}. Then
\begin{align*}\hspace{-5pt}
  V(\hat{\Pi}) \geq {} &
  V(\Pi^*) - 16 M \left(S \log(16 T / \delta_1) / \tilde{\Delta}^2\right) - {} \\
  & 2 M \sqrt{2T \log(4 / \delta_2)}
\end{align*}
holds with probability at least $1 - \delta_1 - \delta_2$.
\end{corollary}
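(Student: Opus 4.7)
The plan is to derive the corollary as a direct consequence of Theorem~\ref{thm:cd_oracle} and Theorem~\ref{thm:main}, by showing that the specified choices of the change-point detector's hyperparameters $w$ and $c$ are admissible, and then plugging the resulting oracle error rate $\varepsilon(T, \delta_1/2)$ into the bound of Theorem~\ref{thm:main}.

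First, I would verify that Theorem~\ref{thm:cd_oracle} applies with $\delta = \delta_1/2$, $c = \tilde{\Delta}/2$, and $w = 8 \log(16 T / \delta_1) / \tilde{\Delta}^2$. The upper bound $c \leq \Delta/2$ is immediate because $c = \tilde{\Delta}/2 \leq \Delta/2$ by the assumption $\tilde{\Delta} \leq \Delta$. For the lower bound, I need to check that $c \geq \sqrt{2 \log(8 T / (\delta_1/2)) / w} = \sqrt{2 \log(16 T / \delta_1) / w}$; substituting the chosen $w$ gives
\begin{align*}
\sqrt{\frac{2 \log(16 T / \delta_1)}{w}}
= \sqrt{\frac{2 \log(16 T / \delta_1) \, \tilde{\Delta}^2}{8 \log(16 T / \delta_1)}}
= \frac{\tilde{\Delta}}{2} = c\,,
\end{align*}
so equality holds and the condition is met.

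Having verified the hypotheses, Theorem~\ref{thm:cd_oracle} implies that the change-point detector is an oracle satisfying Assumption~\ref{ass:oracle} with error function
\begin{align*}
\varepsilon(T, \delta_1/2) \;=\; S w \;=\; \frac{8 S \log(16 T / \delta_1)}{\tilde{\Delta}^2}\,,
\end{align*}
where the bound $\sum_{t=1}^T \mathbbm{1}[\hat{z}_t \neq z_t] \leq Sw$ holds with probability at least $1 - \delta_1/2$.

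Finally, I would invoke Theorem~\ref{thm:main} with this instantiation of $\varepsilon(T, \delta_1/2)$. Substituting yields
\begin{align*}
V(\hat{\Pi})
\geq V(\Pi^*) - 2 M \cdot \frac{8 S \log(16 T / \delta_1)}{\tilde{\Delta}^2} - 2 M \sqrt{2 T \log(4/\delta_2)}\,,
\end{align*}
which matches the stated bound after collecting constants. The claimed failure probability $\delta_1 + \delta_2$ then follows by a union bound over the oracle's failure event (probability $\delta_1/2$, already absorbed into Theorem~\ref{thm:main} as part of its $\delta_1$-budget) and the concentration event used in Theorem~\ref{thm:main}. There is no substantive obstacle; the only step requiring any care is the algebraic verification that the chosen $w$ exactly saturates the lower-bound condition on $c$ in Theorem~\ref{thm:cd_oracle}, so that the hyperparameters are both valid and tight enough to produce the stated $\tilde{\Delta}^{-2}$ dependence.
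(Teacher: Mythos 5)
Your proposal is correct and follows exactly the route the paper intends: the paper states only that \cref{cor:main} ``follows from combining \cref{thm:cd_oracle,thm:main},'' and your verification that $c = \tilde{\Delta}/2$ satisfies both conditions of \cref{thm:cd_oracle} with $\delta = \delta_1/2$ (with the lower bound saturated by the chosen $w$), followed by substituting $\varepsilon(T, \delta_1/2) = Sw$ into \cref{thm:main}, is precisely the omitted calculation. The constants check out: $2M \cdot Sw = 16 M S \log(16T/\delta_1)/\tilde{\Delta}^2$.
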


\cref{cor:main} follows from combining \cref{thm:cd_oracle,thm:main}. It says that if the estimated latent states $\hat{z}_{1:T}$ are generated by \cref{alg:oracle}, and the policy $\hat{\Pi}$ is learned by \cref{alg:policy_learning}, then the difference in the expected rewards of $\hat{\Pi}$ from $\Pi^\ast$ is $O(\log{T}\sqrt{T})$.

\subsection{Online Deployment}
\label{sec:deployment}

Recall that our offline optimizer learns a vector of sub-policies $\hat{\Pi} = (\hat{\pi}_z)_{z \in \mathcal{Z}}$, one for each latent state. During online deployment, however, the latent state is still unobserved, and we cannot query an oracle as we did offline. We need an online algorithm that switches between the learned sub-policies based on past rewards.

Our solution is to treat each sub-policy as an \say{expert}, and select which one to execute in each round by a mixture-of-experts algorithm $\mathcal{E}$. This is because the online performance of sub-policies can be treated as a surrogate predictor for the unknown latent state. Our online algorithm is presented in \cref{alg:policy_deployment}, and takes a mixture-of-experts algorithm $\mathcal{E}$ as an input. At each round $t$, actions are sampled as $a_t \sim \mathcal{E}_t(x_t, \hat{\Pi})$, where $\mathcal{E}_t$ depends on the history of rewards thus far and context $x_t$.

To simplify exposition, we introduce shorthand $\E{z, \pi}{\cdot} = \E{x, a, r \sim P_z, \pi}{\cdot}$. We also assume initially that the online latent sequence is the same as $z_{1:T}$ in offline data; we later give a high-level argument on how to relax this assumption.
Let the $T$-round regret be defined as
$$
    \mathcal{R}(T; \mathcal{E}, \hat{\Pi}) 
    = \sum_{t = 1}^T \E{z_t, \pi^*_{z_t}}{r_t} - \sum_{t = 1}^T \E{z_t, \mathcal{E}_t}{r_t}\,.
$$
The first term is the optimal policy $\Pi^*$ acting according to the true latent state. The second term is our offline-learned policy $\hat{\Pi}$ acting according to $\mathcal{E}$. In this section, we give a brief outline of how to bound the online regret, and defer details to \cref{sec:proof_online}.

Recall that $S$ is the number of stationary segments, and change-points are defined as in \eqref{eqn:changepoints}. Assuming the latent state is constant over a stationary segment, we first have the following lemma that decomposes the regret $\mathcal{R}(T; \mathcal{E}, \hat{\Pi})$.

\begin{restatable}[]{lemma}{regretdecomposition}
The regret $\mathcal{R}(T; \mathcal{E}, \hat{\Pi})$ is bounded from above as
\begin{align}
\begin{split}
    &\mathcal{R}(T; \mathcal{E}, \hat{\Pi}) 
    \leq \left[\sum_{t = 1}^T \E{z_t, \pi^*_{z_t}}{r_t}
    - \sum_{t = 1}^T \E{z_t, \hat{\pi}_{z_t}}{r_t}\right] \\
    &\quad + \left[\sum_{s = 1}^S  \max_{z \in \mathcal{Z}} \sum_{t = \tau_{s-1}}^{\tau_s - 1} \E{z_t, \hat{\pi}_z}{r_t}
     - \sum_{t = 1}^T \E{z_t, \mathcal{E}_t}{r_t}\right]\,.
\end{split}
\end{align}
\label{lem:regret_decomposition}
\end{restatable}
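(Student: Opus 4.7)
The plan is to obtain the bound by adding and subtracting an intermediate benchmark term inside the regret and then exploiting the fact that the latent state is constant on each stationary segment. Concretely, starting from
\begin{align*}
    \mathcal{R}(T; \mathcal{E}, \hat{\Pi})
    = \sum_{t = 1}^T \E{z_t, \pi^*_{z_t}}{r_t} - \sum_{t = 1}^T \E{z_t, \mathcal{E}_t}{r_t}\,,
\end{align*}
I would add and subtract the quantity $\sum_{t = 1}^T \E{z_t, \hat{\pi}_{z_t}}{r_t}$, which is the cumulative reward of the offline-learned latent policy $\hat{\Pi}$ when evaluated with perfect knowledge of $z_{1:T}$. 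This produces two natural groupings: the first is an \emph{optimization gap} of $\hat{\Pi}$ relative to $\Pi^\ast$ (which is exactly the first bracket on the right-hand side), and the second is the gap between $\hat{\Pi}$ with oracle access to $z_t$ and the mixture-of-experts deployment $\mathcal{E}$.

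The key step is then to upper bound this second gap by the \emph{tracking benchmark} $\sum_{s=1}^S \max_{z \in \mathcal{Z}} \sum_{t=\tau_{s-1}}^{\tau_s - 1} \E{z_t, \hat{\pi}_z}{r_t}$. I would first split $\sum_{t=1}^T \E{z_t, \hat{\pi}_{z_t}}{r_t}$ into the stationary segments defined by the change-points \eqref{eqn:changepoints}. On segment $s$, the latent state is constant, so there is a single value $z_{(s)} \in \mathcal{Z}$ with $z_t = z_{(s)}$ and $\hat{\pi}_{z_t} = \hat{\pi}_{z_{(s)}}$ for every $t \in [\tau_{s-1}, \tau_s - 1]$. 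Consequently, the per-segment reward of $\hat{\Pi}$ coincides with that of a single fixed sub-policy, and is trivially at most the maximum over all choices of $z \in \mathcal{Z}$:
\begin{align*}
    \sum_{t=\tau_{s-1}}^{\tau_s - 1} \E{z_t, \hat{\pi}_{z_t}}{r_t}
    = \sum_{t=\tau_{s-1}}^{\tau_s - 1} \E{z_t, \hat{\pi}_{z_{(s)}}}{r_t}
    \leq \max_{z \in \mathcal{Z}} \sum_{t=\tau_{s-1}}^{\tau_s - 1} \E{z_t, \hat{\pi}_z}{r_t}\,.
\end{align*}
Summing over $s \in [S]$ and substituting back into the decomposed regret yields the claimed inequality.

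The main conceptual point (rather than a technical obstacle) is to make sure the second bracket is recognized as the regret of $\mathcal{E}$ against the best single sub-policy in each stationary segment, which is what allows one to plug in standard tracking-expert regret guarantees downstream; the algebra itself is routine and follows immediately from the piecewise-constant structure of $z_{1:T}$ and the linearity of expectation.
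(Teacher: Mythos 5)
Your proposal is correct and follows essentially the same route as the paper's proof: add and subtract the oracle-latent-state reward $\sum_{t=1}^T \E{z_t, \hat{\pi}_{z_t}}{r_t}$, split the second gap over the stationary segments, and use that $z_t$ is constant on each segment so the per-segment reward of $\hat{\Pi}$ is dominated by the per-segment maximum over $z \in \mathcal{Z}$. No differences worth noting.
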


The first-term is exactly $V(\Pi^*) - V(\hat{\Pi})$ and is bounded by \cref{thm:main} in our offline analysis, which shows near-optimality of $\hat{\Pi}$ when $z_{1:T}$ are known.
The second term is bounded by the regret of mixture-of-experts algorithm $\mathcal{E}$ over $S - 1$ change-points. 

Prior work showed an optimal $T$-round switching regret with $S-1$ switches of $O(\sqrt{SKT})$ \citep{nonstationary_contextual_colt}. One such algorithm that is optimal up to log factors is Exp4.S \citep{nonstationary_contextual_colt}. We adapt Exp4.S to stochastic experts in \cref{alg:exp4_s} in \cref{sec:proof_online}. Using this algorithm for $\mathcal{E}$ gives us the following regret bound.

\begin{restatable}[]{theorem}{mainregret}
\label{thm:online_regret} Let $\hat{\Pi}$ be defined as in \cref{thm:main} and $\mathcal{E}$ be Exp4.S (\cref{alg:exp4_s}). Let $z_{1:T}$ be the same latent states as in offline data $\cD$ and $S$ be the number of stationary segments. Then for any $\delta_1, \delta_2 \in (0, 1]$, we have that
\begin{align*}
   &\mathcal{R}(T; \mathcal{E}, \hat{\Pi}) \leq \\
   &\quad 2M \varepsilon(T, \delta_1/2) + 2M\sqrt{2T \log(4/\delta_2)} + 2\sqrt{STK\log{L}}
\end{align*}
holds with probability at least $1 - \delta_1 - \delta_2$.
\end{restatable}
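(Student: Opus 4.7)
The plan is to combine the regret decomposition in \cref{lem:regret_decomposition} with the offline off-policy bound from \cref{thm:main} and a standard switching-regret bound for the Exp4.S mixture-of-experts algorithm. Concretely, \cref{lem:regret_decomposition} already splits $\mathcal{R}(T;\mathcal{E},\hat{\Pi})$ into (i) an offline sub-optimality term $\sum_t \E{z_t,\pi^*_{z_t}}{r_t}-\sum_t \E{z_t,\hat{\pi}_{z_t}}{r_t}$ and (ii) an online term measuring how well $\mathcal{E}_t$ tracks, within each stationary segment, the best sub-policy in the finite set $\{\hat{\pi}_z:z\in\mathcal{Z}\}$. The first step is simply to observe that term (i) is exactly $V(\Pi^*)-V(\hat{\Pi})$, which by \cref{thm:main} is bounded by $2M\varepsilon(T,\delta_1/2)+2M\sqrt{2T\log(4/\delta_2)}$ with probability at least $1-\delta_1-\delta_2$. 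This handles the first two summands of the claimed bound and immediately inherits their failure probability.

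For term (ii), I would use the assumption that $z_t$ is constant over each stationary segment $[\tau_{s-1},\tau_s)$, so within that segment $\max_{z\in\mathcal{Z}}\sum_{t=\tau_{s-1}}^{\tau_s-1}\E{z_t,\hat{\pi}_z}{r_t}$ is the expected cumulative reward of the single best expert $\hat{\pi}_z\in\hat{\Pi}$. Summing over the $S$ segments, term (ii) is therefore bounded by the expected switching regret of a mixture-of-experts algorithm that competes with the best sequence of $L$ stochastic experts subject to at most $S-1$ switches over $T$ rounds and $K$ actions. For the adapted Exp4.S (\cref{alg:exp4_s} in the appendix), the known switching-regret guarantee of \citet{nonstationary_contextual_colt}, which is $O(\sqrt{STK\log L})$, yields the third summand $2\sqrt{STK\log L}$. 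Since this part of the proof concerns expected rewards (the quantities $\E{z_t,\cdot}{r_t}$ appear directly in the definition of $\mathcal{R}$), no additional concentration inequality is required for term (ii) beyond whatever is built into the Exp4.S analysis; the union bound over the two offline failure events is all that is needed.

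Putting the two bounds together gives the claimed inequality with failure probability $\delta_1+\delta_2$. The main obstacle is term (ii): one must verify that the standard Exp4.S switching-regret analysis, originally stated for experts that output action distributions in a contextual bandit, carries over when the experts are our learned stochastic sub-policies $\hat{\pi}_z(\cdot\mid x)$ evaluated on the online context $x_t$. This is largely bookkeeping — treating each $\hat{\pi}_z$ as a context-dependent expert and checking that its advice vector lies in the simplex — but it is the only nonroutine step, and this is why it is relegated to \cref{alg:exp4_s} and the appendix. A minor secondary subtlety is that \cref{lem:regret_decomposition} implicitly assumes the same change-point structure online as offline, which the theorem statement makes explicit by requiring $z_{1:T}$ to match the offline latent sequence; under that assumption the decomposition is tight and the two ingredients combine cleanly.
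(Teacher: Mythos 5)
Your proposal is correct and follows essentially the same route as the paper: apply \cref{lem:regret_decomposition}, identify the first bracket with $V(\Pi^*)-V(\hat{\Pi})$ and bound it by \cref{thm:main}, and bound the second bracket by the switching regret of the stochastic-expert adaptation of Exp4.S (the paper carries out the adaptation you flag as the nonroutine step in \cref{lem:interval_exp4,lem:exp4}, then sets the interval length $\ell = T/S$ to obtain the $O(\sqrt{STK\log L})$ term). The union bound over the two offline failure events is handled exactly as you describe.
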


The regret of deploying our offline-learned policy $\hat{\Pi}$ online elegantly decomposes into the expected reward gap of $\hat{\Pi}$ from $\Pi^\ast$ in off-policy optimization, and the regret of $\mathcal{E}$ that switches between sub-policies of $\hat{\Pi}$.

\subsection{Policy Selection by Posterior Sampling}

In \cref{sec:graphical}, we learned an HMM offline to identify the latent states. The same HMM can be used to sample a latent state from its posterior probability, and act according to the corresponding expert, similarly to Bayesian policy reuse for adversarial environments \citep{bpr}. Some guarantees exist for posterior sampling of stationary latent states \citep{latent_bandits_revisited}, but not for ones that evolve according to an unknown HMM. Our posterior sampling algorithm is in \cref{alg:hmm_ts}, and works by computing a latent state posterior
$Q_t(z) = P(z_t = z \mid x_{1:t-1}, a_{1:t-1}, r_{1:t-1}; \hat{\mathcal{M}})$. 
Note that this is different to $Q_t$ defined in \cref{sec:graphical}, because we only condition on the history.
\cref{alg:hmm_ts} can be used as $\mathcal{E}$ in \cref{alg:policy_deployment} if an HMM was estimated offline. While regret guarantees do not exist as for Exp4.S, such posterior sampling algorithms typically have much better empirical performance. 

\begin{algorithm}[t]
\caption{HMM posterior sampling}\label{alg:hmm_ts}
\KwIn{vector of experts $\hat{\Pi} \in \cH^{\cZ}$ and
estimated HMM parameters $\hat{\mathcal{M}} = \{\hat{P}_0, \hat{\Phi}, \hat{\beta}\}$
}
\BlankLine
Initialize $w_1 = \hat{P}_0$. \\
\For {$t \gets 1, 2, \hdots, T$}{%
    Observe $x_t \in \cX$, and expert feedback $\hat{\pi}_z(\cdot \mid x_t), \, \forall z \in \mathcal{Z}$ \\

    Choose action $a_t \sim w_t$, where for each $a \in \cA$,
    $w_t(a) = \sum_{z \in \mathcal{Z}} Q_t(z) \hat{\pi}_z(a \mid x_t)$ 
    
    Observe $r_t$
    
    Update the latent-state posterior distribution, $\forall z \in \mathcal{Z}$,
    \begin{align*}\hspace{-5pt}
    Q_{t+1}(z) \propto  \sum_{z' \in \mathcal{Z}} Q_t(z') P(r_t \mid x_t, a_t, z'; \hat{\beta}) P(z \mid z'; \hat{\Phi})
    \end{align*}
}
\end{algorithm}

\subsection{Extension to Different Latent Sequences} 

In \cref{thm:online_regret}, we bound the online regret of our algorithm on latent state sequence $z_{1 : T}$ in \cref{lem:regret_decomposition}. Specifically, the first term of the regret decomposition given in \cref{lem:regret_decomposition} is $V(\Pi_*) - V(\hat{\Pi})$, which is computed with respect to $z_{1:T}$.

Now we consider online data with a different latent state sequence $z'_{1:T}$. For stationary policy $\pi \in \cH$, we denote its value in round $t$ by $V'_t(\pi) = \E{x, a, r \sim P_{z'_t}, \pi}{r}$. For policy $\Pi \in \cH^\cZ$, we define its value by $V'(\Pi) = \sum_{z \in \mathcal{Z}} V'_z(\pi_z)$ where $V'_z(\pi_z) = \sum_{t = 1}^T \indicator{z'_t = z} V'_t(\pi_z)$.
We want to characterize how the reward gap $V'(\Pi_*) - V'(\hat{\Pi})$ changes when computed with respect to $z'_{1:T}$.

For $z \in \mathcal{Z}$, let $T_z$ and $T'_z$ be the number of occurrences of $z$ in $z_{1:T}$ and $z'_{1:T}$, respectively. Note that
\begin{align*}
  V_z(\pi_z)
  = \sum_{t = 1}^T \indicator{z_t = z} V_t(\pi_z) = T_z \E{x, a, r \sim P_z, \pi_z}{r}\,,
\end{align*}
and similarly for $V'_z$, as the value of any policy under latent state $z$ is constant. We can bound the difference in reward gap of $\hat{\Pi}$ between the two latent sequences as
\begin{align*}
&\left(V'(\Pi^*) - V'(\hat{\Pi})\right) - \left(V(\Pi^*) - V(\hat{\Pi})\right) \\
&= \sum_{z \in Z} \left(V'_z(\pi^*_z) - V'_z(\hat{\pi}_z)\right) - \left(V_z(\pi^*_z) - V_z(\hat{\pi}_z)\right) \\
&\leq \sum_{z \in Z} \abs{T'_z  - T_z}
\leq \sqrt{L\textstyle\sum_{z \in \mathcal{Z}} (T'_z - T_z)^2}\,.
\end{align*}
where the first inequality is due to naively bounding from above $\E{x, a, r \sim P_z, \pi^*_z}{r} - \E{x, a, r \sim P_z, \hat{\pi}_z}{r} \leq 1$, and the second bounds the $\ell_1$ with $\ell_2$-norm. This additional error can be added to the regret bound in \cref{thm:online_regret}.

\section{Experiments}
\label{sec:experiments}

In this section, we evaluate our approach on synthetic and real-world datasets, and show that it outperforms learning a single stationary policy. We compare the following methods: (i) \textbf{IPS}: a single policy trained on the IPS objective; (ii) \textbf{DR}: a single policy trained on the DR objective, with reward model $\hat{r}(x, a) = \hat{\beta}^T f(x, a)$ fit using least squares; (iii) \textbf{POEM}: a single policy trained on the counterfactual risk minimization (CRM) objective, which adds an empirical covariance regularizer to the objective in \cref{sec:background} \citep{poem}; (iv) \textbf{$k$-CD}: $k$ sub-policies trained using our method with a change-point detector (\cref{alg:oracle}), deployed using Exp4.S (\cref{alg:exp4_s} of \cref{sec:proof_online}); (v) \textbf{$k$-HMM}: $k$ sub-policies trained using our method with an HMM (\cref{alg:hmm_oracle}), deployed using posterior sampling (\cref{alg:hmm_ts}). The first three are baselines in stationary off-policy optimization, and the last two are our approach. In our approach, $k$ is a tunable parameter that estimates the unknown number of latent states $L$. In $k$-CD, we control the number of latent states by $k$-means clustering on detected stationary segments. Specifically, we compute $V(\pi_0)$ the value of the logging policy across each stationary segment, and segments with similar value are clustered into one of $k$ latent states.

\subsection{Synthetic Dataset}
\label{sec:synthetic dataset}

The first problem is a synthetic non-stationary bandit without context, with $\cA = [5]$ and $\mathcal{Z} = [5]$. The mean rewards of actions are sampled uniformly at random as $\mu(a, z) \sim \mathsf{Uniform}(0, 1)$ for each $a \in \cA, z \in \mathcal{Z}$. The rewards are drawn i.i.d.\ as $r \sim \mathcal{N}(\cdot \mid \mu(a, z), \sigma^2)$ with $\sigma = 0.5$. The horizon is $T = 100,000$ rounds. The latent state sequence $z_{1 : T}$ is generated as follows. We set $z_1 = 1$. Then, every $10,000$ rounds, the latent state is incremented by one. After round $50,000$, the latent state is decremented by one every $10,000$ rounds. This is a piecewise-stationary with changes every $10,000$ rounds. Since this problem is non-contextual, the feature vector $f(x, a) \in \{0, 1\}^{|\cA|}$ for action $a$ is its indicator. The logging policy $\pi_0$ is designed to perform well on average over all latent states, which often happens in practice. We define it as $\pi_0(a) \propto \exp(\tilde{\mu}(a))$, where $\tilde{\mu}(a) = \abs{\cZ}^{-1} \sum_{z \in \cZ} \mu(a, z) + \epsilon$ and $\epsilon \sim \mathcal{N}(0, 0.1)$ is a perturbed mean reward for action $a$.

The learned policies are evaluated by a simulated online deployment, on the same latent state sequence $z_{1:T}$ as in logged data. This is the case that we analyze. We relax this assumption in the next experiment. For the change-point detector of $k$-CD, we set $w = 4,000$ and set $c = \sqrt{2 \log(8T^2) / w}$ so that the inequality in \cref{thm:cd_oracle} is satisfied with $\delta = 1/T$. \cref{fig:sim_plot} shows expected rewards of all learned policies. Both of our approaches, $k$-CD and $k$-HMM, significantly outperform learning a stationary policy, with $k$-HMM performing better. This is likely because $k$-HMM acts stochastically according to the learned HMM, whereas $k$-CD, which uses adversarial Exp4.S, acts too conservatively. Since the number of latent states $L$ is not known in practice, it must be estimated, and we also do that in this experiment. This results in a bias-variance trade-off, where underestimating $k < L$ leads to under-partitioned data and biased sub-policies, and overestimating $k > L$ results in over-partitioned data and sub-policies with high variance. This is evident in \cref{fig:sim_plot}, as both result in suboptimal performance compared to choosing $k = L$.

\begin{figure}
\begin{center}
\begin{minipage}{0.32\textwidth}
    \includegraphics[width=\textwidth]{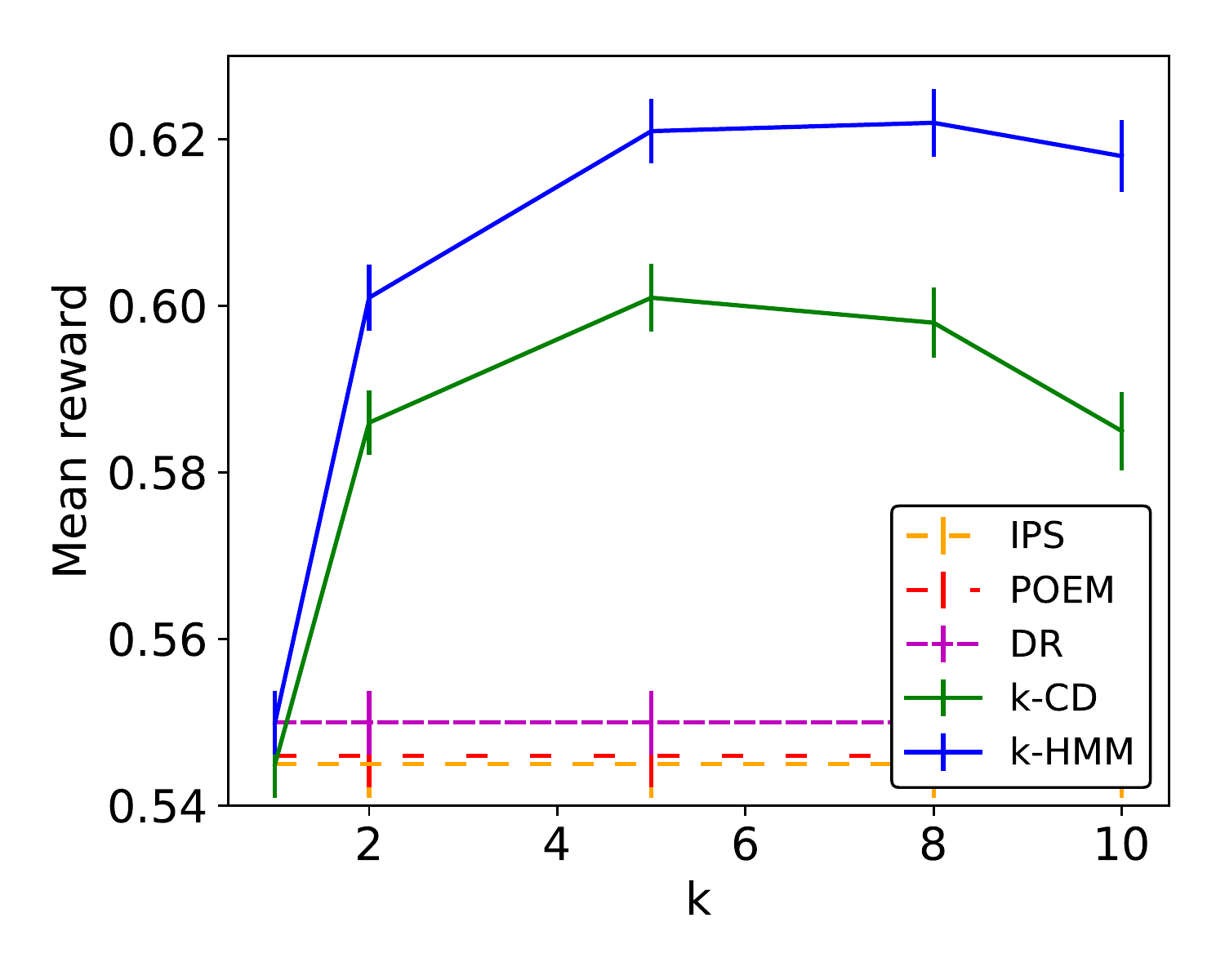}
\end{minipage}
\hspace{0.1in}\begin{minipage}{0.14\textwidth}
{\scriptsize
\begin{tabular}{@{} l r @{}}
    \toprule
    \textbf{Method} & Reward \\ 
    \midrule
    IPS & $0.545$ \\ 
    DR & $0.550$ \\
    POEM & $0.546$ \\ 
    \midrule
    {Ours:} \\
    $k$-CD & $0.601$ \\
    \textbf{$k$-HMM} & $\bm{0.621}$ \\
    \bottomrule
\end{tabular}}
\end{minipage}%
\caption{Mean rewards and their standard deviations in the synthetic dataset. The results are averaged over $10$ runs. The table shows results for $k = 5$.}
\label{fig:sim_plot}
\end{center}
\end{figure}

\begin{figure*}[t]
\begin{center}
\begin{minipage}{0.32\textwidth}
\includegraphics[width=\textwidth]{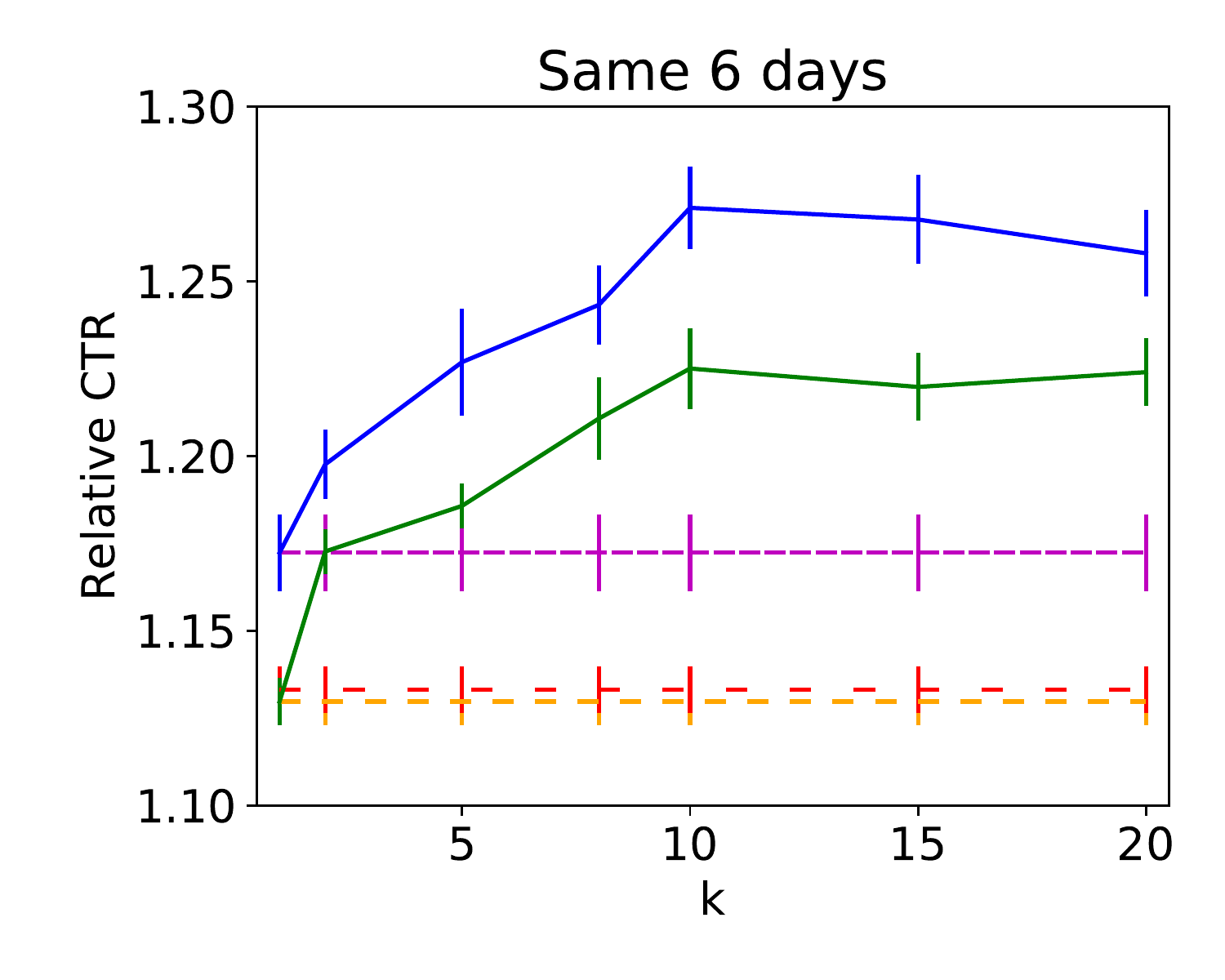}
\end{minipage}
\begin{minipage}{0.32\textwidth}
 \includegraphics[width=\textwidth]{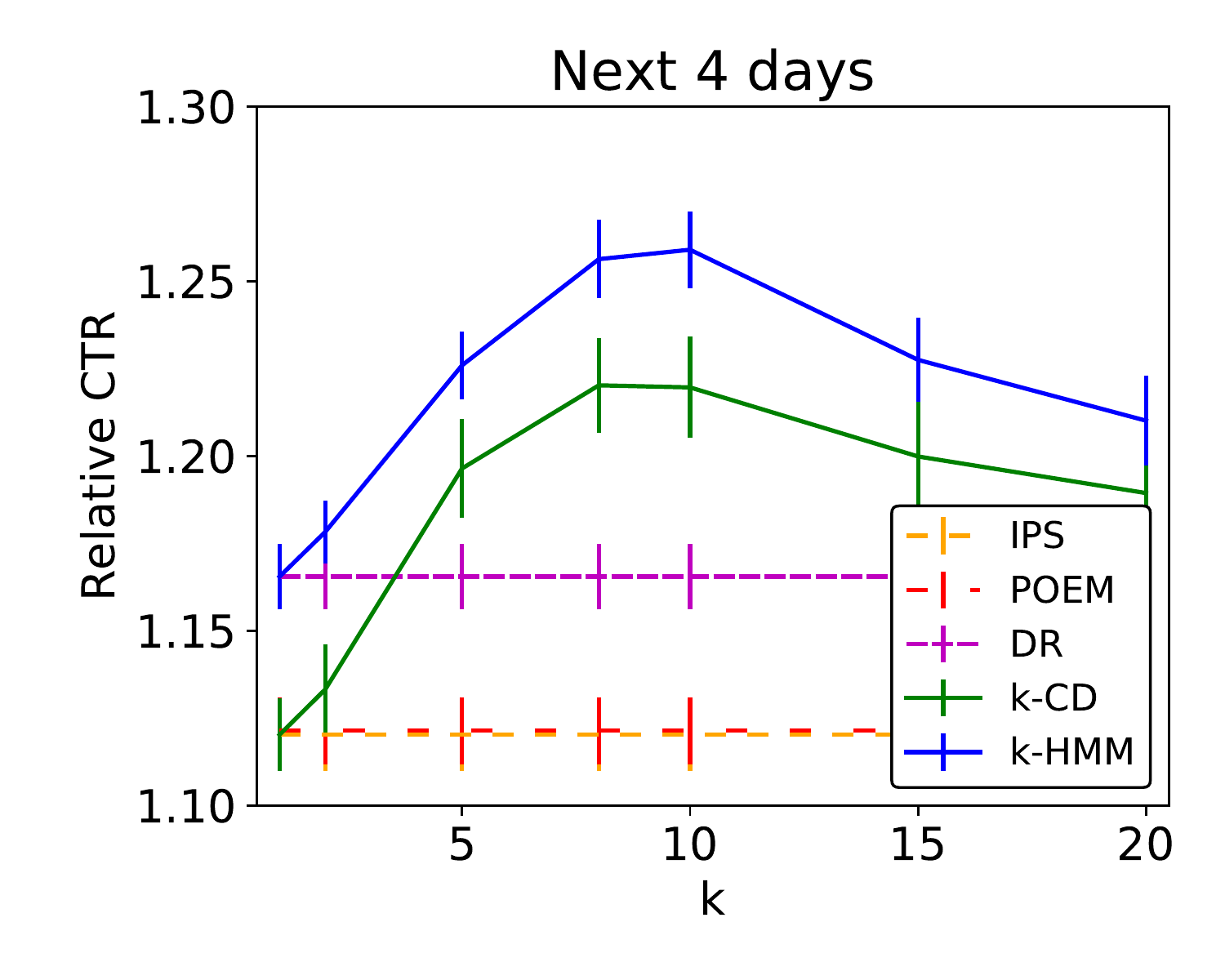}
\end{minipage}
\begin{minipage}{0.33\textwidth}
\hspace{0.1in}{\scriptsize
\begin{tabular}{@{} l r @{\ \  \ } r @{}}
    \toprule
    \textbf{Method} & same 6 days & next 4 days \\ 
    \midrule
    IPS & $1.13 \pm 0.006$ & $1.12 \pm 0.010$\\ 
    DR & $1.16 \pm 0.011$ & $1.17 \pm 0.009$\\
    POEM & $1.13 \pm 0.008$ & $1.13 \pm 0.009$ \\ 
    \midrule
    {Ours:} \\
    $k$-CD & $1.21 \pm 0.012$ & $1.21 \pm 0.010$\\
    \textbf{$k$-HMM} & $\bm{1.25 \pm 0.011}$ & $\bm{1.24 \pm 0.011}$ \\
    \bottomrule
\end{tabular}}
\end{minipage}%
\caption{Mean relative CTRs and their standard deviations in the Yahoo! dataset. The results are averaged over $10$ runs. The table shows results for $k = 10$.}
\label{fig:ydata_plot}
\end{center}
\end{figure*}

\subsection{Yahoo! Dataset}

We also experiment with the Yahoo! clickstream dataset \citep{yahoo_contextual}, which consists of real user interactions. In each interaction, a document is uniformly sampled from a pool of documents to show to a user, and whether the document is clicked by the user is logged. In prior work, the average click-through rate (CTR) of documents across users was empirically verified to change over time \citep{m_ucb,nonstationary_contextual_sigir}

We construct our logged dataset as follows. To reduce the size of the data, we choose a $6$-day horizon and randomly subsample one interaction per second over that horizon. For each sampled interaction, we choose a random subset of $10$ documents that could be shown to the user, to ensure the same number of actions in each round. The context for each interaction is a concatenation of the feature vectors of all $10$ sampled documents. The actions are documents and their rewards are indicators of being clicked in the original dataset. The result of this preprocessing is a logged dataset with horizon $T = 86,400 \times 6 = 518,400$ and $K = 10$ actions. It is important to note that the CTR for each document is likely to be non-stationary, and change smoothly. Hence, this experiment shows that our algorithms perform well even when our modeling assumptions may not hold.

We learn policies offline using the same methods as in \cref{sec:synthetic dataset}. Because our switching strategies depend on past interactions, offline evaluation of such policies from logged data is challenging. One approach is rejection sampling \citep{unbiased_evaluation}; but that can be sample inefficient. We remedy this by constructing a semi-synthetic piecewise-stationary bandit environment. In this environment, the CTR of a document in a given round is estimated from a half-day window around that round, and the click is sampled from a Bernoulli distribution with that mean. The half-day window is to model the non-stationarity of clicks.

We evaluate our learned policies in online deployment in two different bandit experiments. In the first experiment, we sub-sample interactions from the same $6$-day horizon, one per second. This approximately ensures that the underlying latent sequence is the same as in the logged data, which is the special case that we analyze. In the second experiment, we sub-sample interactions from the next $4$ days of data, which potentially have a dramatically different latent state sequence. In \cref{fig:ydata_plot}, we report relative CTRs for all compared methods, averaged over $10$ runs. We also plot the relative CTR of $k$-CD and $k$-HMM methods as a function of the estimated number of latent states, $k$. Both of our approaches perform the best, with $k$-HMM being better due to learning a full environment model. Our methods outperform stationary baselines by up to $10\%$. These results show that even in situations with a non-obvious latent state structure, our approach improves over methods that ignore latent states.

\section{Related Work}
\label{sec:related work}

We study off-policy learning in a non-stationary bandit setting. Both areas have been individually well-explored in prior literature.

\textbf{Non-stationary bandits.} The problem of non-stationary rewards is well-studied in bandit literature~\citep{nonstationary_mab,sliding_window_ucb}. First works adapted to changes passively by weighting rewards, either by exponential discounting \citep{discounted_ucb} or by considering recent rewards in a sliding window~\citep{sliding_window_ucb}. In the adversarial setting \citep{exp3_s,SHIFTBAND}, adaptation can be achieved by bounding the weights of experts from below. These algorithms have state-of-the-art switching regret, which we leverage in the online component of our algorithm. Recent works in piecewise-stationary bandits explored the idea of monitoring reward changes with a change-point detector. The detector examines differences in their distributions \citep{cusum} or empirical means ~\citep{m_ucb}. Such algorithms have state-of-the-art theoretical and empirical performance, and can be extended with similar guarantees to the contextual setting ~\citep{nonstationary_contextual_colt,nonstationary_contextual_sigir}.

\textbf{Off-policy learning.} Many works in off-policy learning have been devoted to building counterfactual estimators for evaluating policies. The unbiased IPS estimator has optimal theoretical guarantees when the logging policy is known or estimated well ~\citep{log_learning,surrogate_policy}. Various techniques have been employed to reduce the variance of IPS estimators, such as importance weight clipping ~\citep{clipping,msft_paper} or learning a reward model, to improve the MSE of the estimator ~\citep{dr,mrdr,switch,surrogate_objective}. Off-policy estimators can be directly applied to learning policies by optimizing the estimated value. Recent works in off-policy optimization additionally regularized the estimated value with its empirical standard deviation~\citep{poem} or used self-normalization as control variates~\citep{norm_poem}. Combinatorial actions, which are common in learning to rank, have been also explored \citep{slate_recommendation,ranking,topk_off_policy}.

Prior work in off-policy learning in non-stationary bandits is sparse, and has focused solely on evaluating a fixed target policy. Such works utilized time-series forecasting of future values \citep{arima_ope} or passively reweighed past observations ~\citep{jagerman}. There are also related works in offline evaluation of history-dependent policies in stationary environments ~\citep{unbiased_evaluation,eff_evaluation}. We are the first to provide a comprehensive method for both off-policy optimization and online policy selection in non-stationary environments.

\section{Conclusions}
\label{sec:conclusions}

In this work, we take first steps for off-policy optimization in non-stationary environments. Our algorithms partition the offline logged data by latent state, and optimize latent sub-policies conditioned on the partitions. We propose two techniques to partition the data: change-point detection and HMM. We prove high-probability bounds on the quality of off-policy optimized sub-policies and their regret during online deployment. Finally, we empirically validate our approach in synthetic and real-world data. We believe that our work is the first step in general off-policy optimization under non-stationarity. Our current approach uses simple non-stationary models of logged data. We propose using a change-point detector or HMM, but do not provide guarantees on HMMs due to lack of existing guarantees in inference. Good directions for future work are better models of non-stationarity, which could potentially handle smooth changes in the logged data.


\bibliographystyle{plainnat}
\bibliography{references}

\clearpage
\onecolumn
\appendix

\section{Proofs for Offline Policy Optimization}
\label{sec:proof_offline}

Recall that we have a fixed latent sequence $z_{1:T}$ such that for round $t$, latent state $z_t$ parameterizes the underlying distribution of reward $r_t \in [0,1]$. Also recall that we have IPS estimator $\hatV$ given in \eqref{eqn:z_ips}, where the clipping parameter $M$ can be ignored by only considering policies in $\cH$. In this section, we denote by $\tilde{V}$ the IPS estimator in \eqref{eqn:z_ips} with the true latent states $z_{1 : T}$. By \cref{lem:unbiased_z_ips}, we know that $\tilde{V}$ is unbiased.

Our first result bounds the discrepancy between the two IPS estimators $\tilde{V}(\Pi)$ and $\hat{V}(\Pi)$:

\begin{lemma} For any $\Pi \in \cH^\mathcal{Z}$ and $\delta \in (0, 1]$,
$
  \left|\hat{V}(\Pi) - \tilde{V}(\Pi)\right|
  \leq M \varepsilon(T, \delta)
$
holds with probability at least $1 - \delta$.
\label{lem:epsilon_oracle_error}
\end{lemma}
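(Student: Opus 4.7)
The plan is to express the difference $\hat{V}(\Pi) - \tilde{V}(\Pi)$ as a sum of per-round contributions that vanish whenever the oracle's prediction matches the true latent state, and then invoke Assumption 1 to control the number of mistakes.

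First, I would collapse the outer sum over $z \in \cZ$ in both estimators. Since $\{\hat{z}_t = z\}_{z \in \cZ}$ partitions $\cZ$ (and similarly for $z_t$), the definition \eqref{eqn:z_ips} simplifies to
$$
  \hat{V}(\Pi)
  = \sum_{t=1}^T \min\!\left\{M, \frac{\pi_{\hat{z}_t}(a_t \mid x_t)}{p_t}\right\} r_t,
  \qquad
  \tilde{V}(\Pi)
  = \sum_{t=1}^T \min\!\left\{M, \frac{\pi_{z_t}(a_t \mid x_t)}{p_t}\right\} r_t.
$$
Subtracting term by term, every round with $\hat{z}_t = z_t$ contributes exactly zero, so only rounds with $\hat{z}_t \neq z_t$ matter.

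Next, I would bound each surviving summand in absolute value. Since $r_t \in [0,1]$ and the clipped importance weight is at most $M$, every term $\min\{M, \pi_z(a_t \mid x_t)/p_t\}\, r_t$ lies in $[0, M]$, so the absolute value of the per-round difference is at most $M$. This yields the deterministic inequality
$$
  \bigl|\hat{V}(\Pi) - \tilde{V}(\Pi)\bigr|
  \;\leq\; M \sum_{t=1}^T \mathbbm{1}[\hat{z}_t \neq z_t].
$$
Applying Assumption 1 with the same $\delta$ bounds the right-hand sum by $\varepsilon(T, \delta)$ with probability at least $1 - \delta$, which gives the claim.

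There is no real technical obstacle here; the lemma is a straightforward bookkeeping argument bolted onto the black-box oracle guarantee. The only subtlety worth flagging is that both estimators are evaluated on the same logged tuples $(x_t, a_t, r_t, p_t)$, so conditional on the oracle's output the difference is deterministic and the sole source of randomness in the probabilistic statement comes from Assumption 1.
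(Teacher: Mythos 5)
Your proof is correct and follows essentially the same route as the paper's: both collapse the estimators to per-round sums, observe that only rounds with $\hat{z}_t \neq z_t$ contribute, bound each such contribution by $M$, and invoke \cref{ass:oracle}. The only cosmetic difference is that you justify the per-term bound via the explicit clipping at $M$, while the paper appeals to the restricted policy class in \eqref{eq:clipped policies}; these are interchangeable here.
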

\begin{proof}
The claim is proved as
\begin{align*}
  \left|\hat{V}(\Pi) - \tilde{V}(\Pi)\right| 
  = \left|\sum_{t = 1}^T \frac{\pi_{\hat{z}_t}(a_t \mid x_t)}{p_t} r_t -
  \frac{\pi_{z_t}(a_t \mid x_t)}{p_t} r_t\right|
  \leq M \sum_{t = 1}^T \mathbbm{1}[\hat{z}_t \neq z_t]
  \leq M \varepsilon(T, \delta)\,.
\end{align*}
The first inequality is by assuming that $\cH$ in $\cH^\mathcal{Z}$ satisfy \eqref{eq:clipped policies}. The second inequality is by \cref{ass:oracle} in \cref{sec:evaluation} and holds with probability at least $1 - \delta$.
\end{proof}

Next, we bound the estimation error of $\tilde{V}(\Pi)$ from $V(\Pi)$. This error is due to the randomness in $\cD$.

\begin{lemma}
For any $\Pi \in \cH^\mathcal{Z}$, logged data $\cD$, and $\delta \in (0, 1]$,
$
  \left|\tilde{V}(\Pi) - V(\Pi)\right|
  \leq M \sqrt{2T \log(2 / \delta)}
$
holds with probability at least $1 - \delta$.
\label{lem:data_randomness_error}
\end{lemma}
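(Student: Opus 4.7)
The plan is to recognize $\tilde V(\Pi) - V(\Pi)$ as a centered sum of bounded, independent random variables, and then apply a standard concentration inequality. Writing $X_t = \frac{\pi_{z_t}(a_t \mid x_t)}{p_t}\, r_t$, we have $\tilde V(\Pi) = \sum_{t=1}^T X_t$ by construction, and by the unbiasedness established in \cref{lem:unbiased_z_ips} we also have $V(\Pi) = \sum_{t=1}^T \mathbb{E}[X_t]$, where each expectation is taken over $x_t, a_t, r_t \sim P_{z_t}, \pi_0$. Because $z_{1:T}$ is fixed (not random) and the logged tuples $(x_t, a_t, r_t)$ are drawn independently across rounds under the logging policy $\pi_0$, the sequence $\{X_t - \mathbb{E}[X_t]\}_{t=1}^T$ consists of independent, zero-mean random variables.

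Next I would control the range of each summand. Since we restrict attention to $\Pi \in \cH^\cZ$ with $\cH$ defined by the clipping condition in \eqref{eq:clipped policies}, the ratio $\pi_{z_t}(a_t \mid x_t)/p_t$ is bounded above by $M$, and since $r_t \in [0,1]$, each $X_t$ lies in $[0, M]$. Hence $|X_t - \mathbb{E}[X_t]| \leq M$ almost surely.

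Finally, applying Hoeffding's inequality (in its Azuma form bounded-increment version) to $\sum_{t=1}^T (X_t - \mathbb{E}[X_t])$ gives
\[
  \mathbb{P}\!\left( \left|\tilde V(\Pi) - V(\Pi)\right| \geq \varepsilon \right)
  \leq 2 \exp\!\left(-\frac{\varepsilon^2}{2 T M^2}\right).
\]
Setting the right-hand side equal to $\delta$ and solving for $\varepsilon$ yields $\varepsilon = M\sqrt{2T \log(2/\delta)}$, which is exactly the claimed bound. There is no substantive obstacle here; the whole proof is a two-line reduction to Hoeffding once unbiasedness and the clipping bound are invoked, and the main thing to be careful about is stating the independence justification (fixed $z_{1:T}$, i.i.d.\ draws under $\pi_0$) cleanly so that the concentration inequality applies without any conditioning subtleties.
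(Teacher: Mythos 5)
Your proposal is correct and is essentially the paper's own argument: the paper defines the martingale $U_t = U_{t-1} + \frac{\pi_{z_t}(a_t\mid x_t)}{p_t}r_t - V_t(\pi_{z_t})$, which is exactly the partial-sum process of your centered summands $X_t - \mathbb{E}[X_t]$, and applies Azuma's inequality with increment bound $M$ to get the same constant. Your framing via independence of the logged tuples (valid here since $z_{1:T}$ is fixed and $\pi_0$ is stationary) and the bounded-difference form of Hoeffding is an equivalent computation, so there is nothing substantive to distinguish the two.
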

\begin{proof}
We define a martingale sequence $(U_t)_{t \in [T] \cup \{0\}}$ over rounds $t$ and then use Azuma's inequality. The sequence is defined as $U_0 = 0$ and
\begin{align*}
  U_t
  = U_{t - 1} + \frac{\pi_{z_t}(a_t \mid x_t)}{p_t} r_t - V_t(\pi_{z_t})
\end{align*}
for $t > 0$. It is easy to verify that this is a martingale. In particular, since $z_t$ is fixed,
\begin{align*}
  \E{x_t, a_t, r_t \sim P_{z_t}, \pi_0}
  {\frac{\pi_{z_t}(a_t \mid x_t)}{p_t} r_t - V_t(\pi_{z_t}) \, \middle| \, U_0, \dots, U_{t - 1}}
  = \E{x_t, a_t, r_t \sim P_{z_t}, \pi_{z_t}}{r_t} - V_t(\pi_{z_t})
  = 0\,,
\end{align*}
and $\E{}{U_t \mid U_0, \dots, U_{t - 1}} = U_{t - 1}$ for any round $t$. Also, since $\Pi \in \cH^{\mathcal{Z}}$, we have
\begin{align*}
  \left|\frac{\pi_{z_t}(a_t \mid x_t)}{p_t} r_t - V_t(\pi_{z_t})\right|
  \leq M\,.
\end{align*}
Finally, by Azuma's inequality, we get
\begin{align*}
  \prob{|\tilde{V}(\Pi) - V(\Pi)| \geq M \sqrt{2T \log(2 / \delta)}}
  = \prob{|U_T - U_0| \geq M \sqrt{2T \log(2 / \delta)}}
  \leq 2 \exp\left[- \frac{4 M^2 T \log(2 / \delta)}{2 M^2 T}\right]
  \leq \delta\,.
\end{align*}
This concludes the proof.
\end{proof}

Using \cref{lem:epsilon_oracle_error,lem:data_randomness_error} above, we can derive the results stated in the main paper.

\evalmain*
\begin{proof}
We have
\begin{align*}
  \left| \hat{V}(\Pi) - V(\Pi) \right|
  \leq \left| \hat{V}(\Pi) - \tilde{V}(\Pi) \right| + \left| \tilde{V}(\Pi) - V(\Pi) \right|
\end{align*}
from the triangle inequality. The result follows from \cref{lem:epsilon_oracle_error} and \cref{lem:data_randomness_error}. 
\end{proof}

\optmain*
\begin{proof}
We have
\begin{align*}
  V(\Pi^*) - V(\hat{\Pi}) 
  = \left[ V(\Pi^{*}) -  \hat{V}(\hat{\Pi}) \right] +
  \left[ \hat{V}(\hat{\Pi}) - V(\hat{\Pi}) \right]
  \leq \left[ V(\Pi^*) - \hat{V}(\Pi^*) \right] +
  \left[ \hat{V}(\hat{\Pi}) - V(\hat{\Pi}) \right]\,,
\end{align*}
where the inequality is from $\hat{\Pi}$ maximizing $\hat{V}$. By \cref{lem:eval_main}, we have for any $\Pi \in \cH^{\mathcal{Z}}$ that
\begin{align*}
  |\hat{V}(\Pi) - V(\Pi)|
  \leq M \varepsilon(T, \delta_1/2) + 2M \sqrt{T \log(4/\delta_2)}
\end{align*}
holds with probability at least $1 - \delta_1/2 - \delta_2/2$. We apply the lemma to both $\hat{\Pi}$ and $\Pi^*$, and get the desired result.
\end{proof}

\section{Proofs for Change-Point Detector}
\label{sec:proof_cd}

Recall that $S$ is the number of stationary segments, and $\tau_0 = 1 < \tau_1 < \hdots < \tau_{S-1} < T = \tau_{S}$ are the change-points. Also recall that we have change-point detector given by \cref{alg:oracle} that on a high-level, computes differences in total reward across sliding windows of length $w$ and detects a change-point if a difference exceeds threshold $c$.
For any $i \in [S - 1]$, let $W_i = [\tau_i - w, \tau_i + w]$ be $w$-close rounds to change-point $\tau_i$. We also define $W = \bigcup_i W_i$ as all rounds $w$-close to any change-point.

First, we bound the probability of false positives, or that we declare any round $t \not\in W$ as a change-point:

\begin{lemma}For any round $t \not\in W$, the probability of a false detection is bounded from above as
\begin{align*}
  \prob{\abs{\mu_t^- - \mu_t^+} \geq c}
  \leq 4 \exp\left[- \frac{w c^2}{2}\right]\,.
\end{align*}
\label{lem:cd_false_positive}
\end{lemma}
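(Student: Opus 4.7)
The plan is to exploit the assumption that $t \not\in W$ means neither the pre-window $[t-w, t-1]$ nor the post-window $[t, t+w-1]$ crosses any change-point, so both windows lie entirely within a single stationary segment. Within such a segment the latent state is constant, so the rewards $r_i$ for $i$ in either window are i.i.d.\ samples from the marginal distribution $\mathbb{E}_{x \sim P_z^{\mathsf{x}}, a \sim \pi_0(\cdot \mid x)}[P_z^{\mathsf{r}}(\cdot \mid x, a)]$, and in particular $\mathbb{E}[\mu_t^-] = \mathbb{E}[\mu_t^+] =: \mu$.

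Next, since each reward lies in $[0, 1]$ and each of $\mu_t^-, \mu_t^+$ is an average of $w$ i.i.d.\ bounded rewards, I apply Hoeffding's inequality to each separately. For any $\eta > 0$,
\begin{align*}
\prob{\abs{\mu_t^- - \mu} \geq \eta} &\leq 2 \exp\bigl[-2 w \eta^2\bigr], \\
\prob{\abs{\mu_t^+ - \mu} \geq \eta} &\leq 2 \exp\bigl[-2 w \eta^2\bigr].
\end{align*}

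Finally, I combine these via the triangle inequality $\abs{\mu_t^- - \mu_t^+} \leq \abs{\mu_t^- - \mu} + \abs{\mu_t^+ - \mu}$ and a union bound: if $\abs{\mu_t^- - \mu_t^+} \geq c$, then at least one of $\abs{\mu_t^- - \mu}$ or $\abs{\mu_t^+ - \mu}$ exceeds $c/2$. Setting $\eta = c/2$ gives
\begin{align*}
\prob{\abs{\mu_t^- - \mu_t^+} \geq c} \leq 2 \cdot 2\exp\!\left[-\frac{wc^2}{2}\right] = 4 \exp\!\left[-\frac{wc^2}{2}\right].
\end{align*}

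There is no real obstacle here; the only subtle point is to verify that the two sample means in question genuinely have a common mean, which requires the observation that $t \not\in W$ rules out the interior $[t-w, t+w-1]$ containing a change-point. The rest is Hoeffding plus a union bound. This lemma will later be used together with a union bound over at most $T$ candidate rounds to conclude that, with the stated choice of $c$ and $w$ in \cref{thm:cd_oracle}, no false positives occur with probability at least $1 - \delta/2$, which combined with a detection guarantee for rounds near true change-points yields the overall $Sw$ prediction-error bound.
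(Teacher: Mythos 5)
Your proof is correct and follows essentially the same route as the paper's: observe that $t \not\in W$ forces both windows into a single stationary segment so that $\E{}{\mu_t^-} = \E{}{\mu_t^+}$, then apply the triangle inequality, a union bound over the two events $\abs{\mu_t^\pm - \E{}{\mu_t^\pm}} \geq c/2$, and Hoeffding's inequality to each. Your version is in fact slightly more careful than the paper's, which justifies the equality of means in one clause and whose final displayed bound drops the factor of $4$ (evidently a typo, since the lemma statement and your calculation both carry it).
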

\begin{proof}
Since $t \not\in \bigcup_i W_i$, we have $\E{}{\mu_t^-} = \E{}{\mu_t^+}$. By Hoeffding's inequality, we get
\begin{align*}
  \prob{\abs{\mu_t^- - \mu_t^+} \geq c}
  \leq \prob{\abs{\mu_t^- - \E{}{\mu_t^-}} \geq c / 2} +
  \prob{\abs{\mu_t^+ - \E{}{\mu_t^+}} \geq c / 2}
  \leq \exp\left[- \frac{w c^2}{2}\right]\,.
\end{align*}
This concludes the proof.
\end{proof}

Next we bound the probability of failing to detect a change-point in $W$:

\begin{lemma} For any positive $c \leq \Delta / 2$ and $W_i$, a change-point is not detected in $W_i$ with probability at most
\begin{align*}
  \prob{\forall t \in W_i: \abs{\mu_t^- - \mu_t^+} \leq c}
  \leq 4 \exp\left[- \frac{w c^2}{2}\right]\,.
\end{align*}
\label{lem:cd_recall}
\end{lemma}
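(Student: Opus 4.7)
The plan is to observe that the probability of missing a change-point anywhere in the window $W_i$ is bounded above by the probability of missing it at the single round $t = \tau_i$, since the "for-all" event over $W_i$ is contained in the event at $\tau_i$. That is,
\begin{align*}
  \prob{\forall t \in W_i: \abs{\mu_t^- - \mu_t^+} \leq c}
  \leq \prob{\abs{\mu_{\tau_i}^- - \mu_{\tau_i}^+} \leq c}\,,
\end{align*}
so it suffices to control the right-hand side. The nice feature of $t = \tau_i$ is that the two sliding windows $[\tau_i - w, \tau_i - 1]$ and $[\tau_i, \tau_i + w - 1]$ are both entirely within a single stationary segment (using the separation $\tau_i - \tau_{i-1} > 4w$ from \cref{thm:cd_oracle}). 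Hence $\E{}{\mu_{\tau_i}^-} = V_{\tau_i - 1}(\pi_0)$ and $\E{}{\mu_{\tau_i}^+} = V_{\tau_i}(\pi_0)$, and the detectability assumption yields $|\E{}{\mu_{\tau_i}^-} - \E{}{\mu_{\tau_i}^+}| \geq \Delta$.

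Next I would apply the reverse triangle inequality: if both $|\mu_{\tau_i}^- - \E{}{\mu_{\tau_i}^-}| \leq (\Delta - c)/2$ and $|\mu_{\tau_i}^+ - \E{}{\mu_{\tau_i}^+}| \leq (\Delta - c)/2$, then
\begin{align*}
  \abs{\mu_{\tau_i}^- - \mu_{\tau_i}^+}
  \geq \abs{\E{}{\mu_{\tau_i}^-} - \E{}{\mu_{\tau_i}^+}} - (\Delta - c)
  \geq \Delta - (\Delta - c) = c\,.
\end{align*}
Contrapositively, the "missed detection" event $\{\abs{\mu_{\tau_i}^- - \mu_{\tau_i}^+} \leq c\}$ is contained in the union $\{|\mu_{\tau_i}^- - \E{}{\mu_{\tau_i}^-}| > (\Delta-c)/2\} \cup \{|\mu_{\tau_i}^+ - \E{}{\mu_{\tau_i}^+}| > (\Delta-c)/2\}$.

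Then I would bound each piece using Hoeffding's inequality applied to the average of $w$ independent $[0,1]$-valued rewards. Each piece contributes at most $2\exp[-w(\Delta-c)^2/2]$, so the union bound yields $4\exp[-w(\Delta - c)^2 / 2]$. Finally, I would invoke the hypothesis $c \leq \Delta/2$, which gives $\Delta - c \geq c$ and hence $(\Delta-c)^2 \geq c^2$, to weaken the bound to the desired $4\exp[-wc^2/2]$. The only real subtlety is verifying that the two windows at $\tau_i$ lie fully inside adjacent stationary segments, which follows from the separation assumption $\tau_i - \tau_{i-1} > 4w$; the remainder is just Hoeffding plus a triangle inequality.
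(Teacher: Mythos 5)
Your proposal is correct and follows essentially the same route as the paper: reduce the for-all event to the single round $t=\tau_i$, use the reverse triangle inequality with the detectability gap $\Delta$ to contain the missed-detection event in a union of two deviation events at level $(\Delta-c)/2$, and finish with Hoeffding plus the hypothesis $c\leq\Delta/2$. The only (immaterial) difference is that the paper first weakens the deviation threshold from $(\Delta-c)/2$ to $c/2$ and then applies Hoeffding, whereas you apply Hoeffding at $(\Delta-c)/2$ and weaken the resulting exponent afterward.
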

\begin{proof}
Fix $s = \tau_i$. From $s \in W_i$, we have
\begin{align*}
  \prob{\forall t \in W_i: \abs{\mu_t^- - \mu_t^+} \leq c}
  & = 1 - \prob{\exists t \in W_i: \abs{\mu_t^- - \mu_t^+} > c}
  \leq 1 - \prob{\abs{\mu_s^- - \mu_s^+} > c} \\
  & = \prob{\abs{\mu_s^- - \mu_s^+} \leq c}\,.
\end{align*}
Note that $\abs{\mu_s^- - \mu_s^+} \leq c$ implies that either $\mu_s^-$ or $\mu_s^+$ is not close to its mean. More specifically, since $\E{}{\mu_s^-} = V_{s - 1}(\pi_0)$, $\E{}{\mu_s^+} = V_s(\pi_0)$, and $\abs{V_s(\pi_0) - V_{s - 1}(\pi_0)} \geq \Delta$, we have
\begin{align*}
  \prob{\abs{\mu_s^- - \mu_s^+} \leq c}
  \leq \prob{\abs{\mu_s^- - \E{}{\mu_s^-}} \geq \frac{\Delta - c}{2}} +
  \prob{\abs{\mu_s^+ - \E{}{\mu_s^+}} \geq \frac{\Delta - c}{2}}\,.
\end{align*}
From $2 c \leq \Delta$ and by Hoeffding's inequality, the first term is bounded as
\begin{align*}
  \prob{\abs{\mu_s^- - \E{}{\mu_s^-}} \geq \frac{\Delta - c}{2}}
  \leq \prob{\abs{\mu_s^- - \E{}{\mu_s^-}} \geq c / 2}
  \leq 2 \exp\left[- \frac{w c^2}{2}\right]\,.
\end{align*}
The second term is bounded analogously. Finally, we chain all inequalities and get our claim.
\end{proof}

Finally, we prove \cref{thm:cd_oracle} by applying \cref{lem:cd_false_positive} to all rounds $t \not\in W$, \cref{lem:cd_recall} to all change-points, and then chaining them by the union bound.

\cdoracle*
\begin{proof}
Define $\delta \in (0, 1].$ We see that given $w$, setting $c$ as described satisfies,
$$ 4T\exp\left[\frac{-wc^2}{2}\right], \quad
4k\exp\left[\frac{-wc^2}{2}\right] \leq \frac{\delta}{2}.$$
We know that $\varepsilon(T, \delta) = kw$ when all the estimated changepoints are in $W$ (at most $w$ rounds from a true change-point), and every $W_i \in W$ contains exactly one estimated change-point. This cannot happen if (1) a change-point is falsely detected outside $W$, and (2), no change-point is detected in some $W_i \in W$.

We can bound from above the probability of any error occurring with the union bound. Proposition 3 applied to every round upper-bounds the probability of (1) by $4T\exp\left(-wc^2/2\right)$. Meanwhile, Proposition 4 applied to every change-point upper-bounds the probability of (2) by $4k\exp\left(-wc^2/2\right)$. From Algorithm~\ref{alg:oracle}, we remove a $4w$-window around each detected changepoint, and under the assumption that $\tau_i - \tau_{i - 1} > 4w$ for all $i \in [k]$, we guarantee that exactly one changepoint is detected in each $W_i$ for true changepoint $\tau_i$. Combining yields the total probability of an error,
$$ 4T\exp\left[\frac{-wc^2}{2}\right] +
4k\exp\left[\frac{-wc^2}{2}\right] \leq \delta,$$
which is the desired result.
\end{proof}

\section{Proofs for Online Deployment}
\label{sec:proof_online}

Recall that we have a mixture-of-experts algorithm $\mathcal{E}$ and experts/sub-policies $\hat{\Pi} = (\hat{\pi})_{z \in \mathcal{Z}}$, such that for each round $t$, actions are sampled according to $a_t \sim \mathcal{E}_t(x_t, \hat{\pi})$. Let $\mathcal{E}$ be Exp4.S as described in \cref{alg:exp4_s}; this is similar to one proposed in \citet{nonstationary_contextual_colt}, but for stochastic experts. 

\begin{algorithm}[H]
\caption{Exp4.S}\label{alg:exp4_s}
\KwIn{vector of expert sub-policies $\hat{\Pi} = (\hat{\pi}_z)_{z \in \mathcal{Z}}$ with $|\mathcal{Z}| = L$, and hyperparameters
$\beta, \eta > 0, \gamma \in (0, 1]$
}
\BlankLine
Initialize $w_1 = (1/L, \hdots, 1/L) \in [0, 1]^L$. \\
\For {$t \gets 1, 2, \hdots, T$}{%
    Observe $x_t$ and expert feedback $\hat{\pi}_z(\cdot \mid x_t), \, \forall z \in \mathcal{Z}$. \\

    Choose $a_t \sim \mathcal{E}_t$, where for each $a \in \cA$,
    $$\mathcal{E}_t(a) = (1 - \gamma) \sum_{z \in \mathcal{Z}} w_t(z) \hat{\pi}_z(a \mid x_t) 
    + \frac{\gamma}{L}\,.$$
    
    Observe $r_t$
    
    Estimate the action costs under full feedback $\hat{c}_t(a) = \mathbbm{1}[a_t = a] \frac{1 - r_t}{\mathcal{E}_t(a)}$, $\forall a \in \cA$.

    Propagate the cost to the experts $\tilde{c}_t(z) = \hat{c}_t(a_t) \hat{\pi}_z(a_t \mid x_t)$, $\forall z \in \mathcal{Z}$. 
    
    Update the distribution weights, 
    $\tilde{w}_{t+1}(z) \propto  w_t(z) \exp\left(-\eta \tilde{c}_t(z)\right)$, $\forall z \in \mathcal{Z}$.
    
    Mix with uniform weights, 
    $w_{t+1}(z) = (1 - \beta) w_t(z) + \beta$, $\forall z \in \mathcal{Z}$.
}
\end{algorithm}

Our first result is the following regret guarantee over any stationary segment. A version of this proof for deterministic experts is in Theorem 2 of \citet{nonstationary_contextual_colt}.
\begin{lemma} 
\label{lem:interval_exp4}
Let $\mathcal{E}$ be Exp4.S as in \cref{alg:exp4_s}. Also, let $\gamma = 0, \eta = \sqrt{\log(L) / (\ell K)}$, and $\beta = 1/L$. Then, for any stationary segment $[\tau_{s - 1}, \tau_s - 1]$ of length at most $\ell$, any history up to $\tau_{s - 1}$, and any latent state $z \in \mathcal{Z}$, the regret is bounded as
\begin{align*}
    \sum_{t = \tau_{s - 1}}^{\tau_s - 1}\E{z_t, \hat{\pi}_z}{r_t} - \E{z_t, \mathcal{E}_t}{r_t}
    \leq \sqrt{2 \ell K \log(L)}\,.
\end{align*}
\end{lemma}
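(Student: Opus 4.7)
The plan is to run the standard Exp4-style potential argument restricted to the segment $[\tau_{s-1}, \tau_s - 1]$, and then convert the bound on estimated costs into a bound on expected rewards via the unbiased-estimator identities for the stochastic-expert cost $\tilde c_t(z) = \hat c_t(a_t)\hat\pi_z(a_t \mid x_t)$. Fix the segment length $\ell' \leq \ell$ and condition on the history up to round $\tau_{s-1}$, so the initial weights $w_{\tau_{s-1}}(\cdot)$ are determined. First I would define the weight sum $W_t = \sum_{z' \in \mathcal{Z}} w_t(z')$ and control $\log(W_{t+1}/W_t)$ via the elementary inequality $e^{-x} \le 1 - x + x^2/2$ for $x \ge 0$. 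This is applicable because $\eta \tilde c_t(z) \ge 0$ and is bounded (the mixing step guarantees $w_t(z') \ge \beta/L$, hence $\mathcal{E}_t(a_t) \ge \beta/L$ as well, so $\hat c_t(a_t)$ stays bounded). Telescoping across the segment and lower-bounding $W_{\tau_s}$ by any fixed expert's contribution $w_{\tau_{s-1}}(z)\exp(-\eta\sum_t \tilde c_t(z))$ yields the familiar per-segment inequality
\begin{align*}
\sum_{t=\tau_{s-1}}^{\tau_s - 1}\Big(\sum_{z'} \tfrac{w_t(z')}{W_t}\tilde c_t(z') - \tilde c_t(z)\Big) \le \tfrac{\log(1/w_{\tau_{s-1}}(z))}{\eta} + \tfrac{\eta}{2}\sum_{t=\tau_{s-1}}^{\tau_s - 1}\sum_{z'} \tfrac{w_t(z')}{W_t}\tilde c_t(z')^2.
\end{align*}

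Next I would take expectations over the action draws $a_t \sim \mathcal{E}_t$ at each round, conditional on the algorithm's state. Since $\gamma = 0$, the sampling distribution is $\mathcal{E}_t(a) = \sum_{z'} w_t(z')\hat\pi_{z'}(a \mid x_t)$, so by direct computation $\sum_{z'}(w_t(z')/W_t)\tilde c_t(z') = (1 - r_t)/W_t$ at the realized $a_t$ and, in expectation, equals $1 - \E{z_t,\mathcal{E}_t}{r_t}$ up to an $O(\beta)$ factor from $W_t$. The unbiasedness of the importance-weighted cost estimator gives $\E{a_t\sim\mathcal{E}_t}{\tilde c_t(z)} = 1 - \E{z_t,\hat\pi_z}{r_t}$. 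So the expected LHS equals $\sum_t(\E{z_t,\hat\pi_z}{r_t} - \E{z_t,\mathcal{E}_t}{r_t})$, which is the regret we want to bound. The variance term on the RHS admits the standard per-round bound $\E{a_t\sim\mathcal{E}_t}{\sum_{z'}(w_t(z')/W_t)\tilde c_t(z')^2} \le K$, via a routine Cauchy-Schwarz step combined with the inverse-propensity bound on $\hat c_t$. Summed over the $\ell' \le \ell$ rounds, the variance contributes at most $\eta\ell K/2$.

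Finally, the mixing step ensures $w_{\tau_{s-1}}(z) \ge \beta/L$ for every $z$, so $\log(1/w_{\tau_{s-1}}(z)) = O(\log L)$. Plugging in $\eta = \sqrt{\log L/(\ell K)}$ balances the bias term $\log L/\eta$ against the variance term $\eta\ell K/2$ and produces the claimed $\sqrt{2\ell K\log L}$. The low-order $O(\beta\ell) = O(\ell/L)$ slack introduced by $W_t$ drifting away from $1$ and by the constant in $\log(1/w_{\tau_{s-1}}(z))$ can be absorbed by tightening the constants or by a more refined potential argument.

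The main obstacle I expect is handling the mixing step $w_{t+1}(z) = (1-\beta)\tilde w_{t+1}(z) + \beta/L$ cleanly, since vanilla Exp4 has no mixing and the potential telescoping is more delicate in its presence. I would follow the Exp4.S analysis of \citet{nonstationary_contextual_colt}: either show that mixing perturbs $\log W_t$ by at most $\beta$ per round (so at most $\beta\ell = \ell/L$ over the segment, a lower-order term), or fold the mixing directly into the multiplicative update and reverify the $e^{-x}$ convexity inequality. Verifying that $W_t$ stays within a $(1\pm O(\beta))$ factor of $1$ throughout the segment, so that the ``$/W_t$'' normalization in the identities above is essentially harmless, is a simple induction but the step requiring the most care.
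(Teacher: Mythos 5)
Your proposal is correct and follows essentially the same route as the paper's proof: a second-order potential/telescoping argument over the segment, the unbiasedness identities $\E{z_t,\mathcal{E}_t}{\tilde c_t(z')} = 1 - \E{z_t,\hat\pi_{z'}}{r_t}$ (valid since $\gamma = 0$), the per-round variance bound $\sum_{z'} w_t(z')\E{}{\tilde c_t(z')^2} \le K$, and tuning $\eta = \sqrt{\log(L)/(\ell K)}$ with the mixing step supplying $w_t(z) \ge \beta$ so that $\log(1/w_{\tau_{s-1}}(z)) \le \log(1/\beta) = \log L$. The only cosmetic difference is that you track the unnormalized weight sum $W_t$ while the paper works directly with the normalized distribution and folds the mixing into the update via $\tilde w_{t+1}(z) = (w_{t+1}(z) - \beta)/(1-\beta)$ at a cost of $2\beta$ per round (the second of the two options you list), and your constant bookkeeping is no looser than the paper's own.
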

\begin{proof}
First, we have the following upper-bound,
\begin{align*}
    \log \left[\sum_{z' \in \mathcal{Z}} w_t(z') \exp(-\eta \tilde{c}_t(z'))\right]
    &\leq \log \left[\sum_{z' \in \mathcal{Z}} w_t(z') \left(1 - \eta \tilde{c}_t(z') + \eta^2 \tilde{c}_t(z')^2\right)\right] \\
    &\leq -\eta \sum_{z' \in \mathcal{Z}} w_t(z') \tilde{c}_t(z') + \eta^2\sum_{z' \in \mathcal{Z}}w_t(z') \tilde{c}_t(z')^2\,,
\end{align*}
where we use that $\exp(-x) \leq 1 - x + x^2$, and $\log(1 + x) \leq x$ for all $x \geq 0$. 
Meanwhile, for any $z \in \mathcal{Z}$, we can also bound the same quantity from below,
\begin{align*}
    \log \left[\sum_{z' \in \mathcal{Z}} w_t(z') \exp(-\eta \tilde{c}_t(z'))\right]
    = \log \left[\frac{w_t(z) \exp(-\eta \tilde{c}_t(z))}{\tilde{w}_{t + 1}(z)}\right]
    &= \log \left[\frac{w_t(z)(1 - \beta)}{w_{t + 1}(z) - \beta}\right] -\eta \tilde{c}_t(z)  \\
    &\geq \log \left[\frac{w_t(z)}{w_{t + 1}(z)} \right] - 2\beta -\eta \tilde{c}_t(z)\,,
\end{align*}
where for the last inequality, we use that $\log(1 - \beta) \geq -\beta/(1 - \beta) \geq -2\beta$. Combining the two inequalities, summing over all $t \in [\tau_{s - 1}, \tau_s - 1]$, and telescoping yields,
\begin{align*}
    \sum_{t = \tau_{s - 1}}^{\tau_s - 1} \sum_{z' \in \mathcal{Z}} w_t(z') \tilde{c}_t(z') - \tilde{c}_t(z) &\leq \frac{1}{\eta}\log\left[\frac{w_{\tau_s}(z)}{w_{\tau_{s - 1}}(z)}\right]
    + \frac{2\beta \ell}{\eta} + \eta\sum_{t = \tau_{s - 1}}^{\tau_s - 1}\sum_{z' \in \mathcal{Z}}w_t(z') \tilde{c}_t(z')^2 \\
    &\leq 
    \frac{\log(1/ \beta) + 2\beta \ell}{\eta} 
    + \eta\sum_{t = \tau_{s - 1}}^{\tau_s - 1}\sum_{z' \in \mathcal{Z}}w_t(z') \tilde{c}_t(z')^2\,,
\end{align*}
where we use that $w_t(z) \in [\beta, 1]$ for all rounds $t$.

When $\gamma = 0$ we know that $\hat{c_t}(a_t)$ is unbiased, or $\E{z_t, \mathcal{E}_t}{\hat{c_t}(a_t)} = 1 - \E{z_t, \mathcal{E}_t}{r_t}$. We also have that for any $z'\in \mathcal{Z}$,
\begin{align*}
\E{z_t, \mathcal{E}_t}{\tilde{c}_t(z')} = \E{z_t, \mathcal{E}_t}{\sum_{a \in \cA} \hat{\pi}_{z'}(a \mid x_t) \hat{c}_t(a)} = 1 - \E{z_t, \hat{\pi}_z}{r_t}\,.
\end{align*}
Taking the expectation of both sides leads to,
\begin{align*}
    \sum_{t = \tau_{s - 1}}^{\tau_s - 1}\E{z_t, \hat{\pi}_z}{r_t} - \E{z_t, \mathcal{E}_t}{r_t} 
    \leq \frac{\log(1/ \beta) + 2\beta \ell}{\eta} 
    + \eta\sum_{t = \tau_{s - 1}}^{\tau_s - 1} \sum_{z' \in \mathcal{Z}}\E{z_t, \mathcal{E}_t}{w_t(z') \tilde{c}_t(z')^2}\,.
\end{align*}
Next, we have that for any $z' \in \mathcal{Z}$,
\begin{align*}
    \E{z_t, \mathcal{E}_t}{\tilde{c}_t(z')^2} = \E{z_t, \mathcal{E}_t}{\left(\frac{\hat{\pi}_{z'}(a_t \mid x_t) (1 - r_t)}{\mathcal{E}_t(a_t)}\right)^2}  \leq \sum_{a \in \cA} \frac{\hat{\pi}_{z'}(a \mid x_t)}{\mathcal{E}_t(a)}\,,
\end{align*}
where we use that $a_t \sim \mathcal{E}_t$ and $r_t \in [0, 1]$. Substituting this result yields,
\begin{align*}
    \sum_{z' \in \mathcal{Z}} \E{z_t, \mathcal{E}_t}{w_t(z') \tilde{c}_t(z')^2}
    &\leq \sum_{a \in \cA} \E{z_t, \mathcal{E}_t}{\frac{1}{\mathcal{E}_t(a)} \sum_{z' \in \mathcal{Z}} w_t(z') \pi_{z'}(a_t \mid x_t)} \leq K\,,
\end{align*}
where we again use that $a_t \sim \mathcal{E}_t$. Substituting into the regret bound and using the values for $\eta, \beta$ yields
\begin{align*}
    \sum_{t = \tau_{s - 1}}^{\tau_s - 1}\E{z_t, \hat{\pi}_z}{r_t} - \E{z_t, \mathcal{E}_t}{r_t}
    \leq \frac{\log(1/ \beta) + 2\beta \ell}{\eta} 
    + \eta K \ell
    &\leq \sqrt{2 \ell K \log(L)}\,,
\end{align*}
as desired.
\end{proof}

In practice, we do not know the lengths of stationary segments, and may not be able to find a tight upper-bound $\ell$ on the lengths of stationary segments. However, in our analysis, we can further partition stationary segments so that they do not exceed length $\ell$ at the cost of increasing the number of change-points. This is formalized in the following corollary.
\begin{lemma}
\label{lem:exp4}
Let $\mathcal{E}$ be Exp4.S as in \cref{alg:exp4_s}. Also, let $\gamma = 0, \eta = \sqrt{\log(L) / (\ell K)}$, and $\beta = 1/L$. Then, the total regret is bounded by
\begin{align*}
\sum_{s = 1}^S  \max_{z \in \mathcal{Z}} \sum_{t = \tau_{s-1}}^{\tau_s - 1} \E{z_t, \hat{\pi}_z}{r_t}
- \sum_{t = 1}^T \E{z_t, \mathcal{E}_t}{r_t}
\leq \left(T/\sqrt{\ell} + S \sqrt{\ell}\right)\sqrt{2K\log(L)}\,.
\end{align*}
\end{lemma}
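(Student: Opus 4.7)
The plan is to reduce the claim to the single-interval bound in \cref{lem:interval_exp4} by refining the partition $\tau_0 < \tau_1 < \cdots < \tau_S$ into sub-segments no longer than $\ell$. Concretely, for each stationary segment $I_s = [\tau_{s-1}, \tau_s - 1]$ of length $n_s = \tau_s - \tau_{s-1}$, I would split $I_s$ into $m_s = \lceil n_s / \ell \rceil$ consecutive blocks, each of length at most $\ell$. Because $z_t$ is constant on $I_s$, it is also constant on each of these blocks, and the quantity $\max_{z \in \cZ} \sum_{t = \tau_{s-1}}^{\tau_s - 1} \E{z_t, \hat{\pi}_z}{r_t}$ decomposes additively over the blocks while the maximizing $z$ is the same for every block contained in $I_s$.

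Next, I would apply \cref{lem:interval_exp4} to each block individually. The crucial feature of that lemma is that it holds for \emph{any} history up to the start of the interval, so Exp4.S does not need to reset its weights at block boundaries: the per-block regret bound $\sqrt{2 \ell K \log L}$ applies regardless of $w_t$ at the start of the block. Summing this per-block bound over every block inside every original segment gives
\begin{align*}
    \sum_{s=1}^S \max_{z \in \cZ} \sum_{t = \tau_{s-1}}^{\tau_s - 1} \E{z_t, \hat{\pi}_z}{r_t} - \sum_{t=1}^T \E{z_t, \mathcal{E}_t}{r_t} \leq \left(\sum_{s=1}^S m_s\right) \sqrt{2 \ell K \log L}\,.
\end{align*}

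Finally, I would bound the total block count by $\sum_{s=1}^S m_s \leq \sum_{s=1}^S (n_s/\ell + 1) = T/\ell + S$ and simplify the resulting product $(T/\ell + S)\sqrt{2\ell K \log L} = (T/\sqrt{\ell} + S\sqrt{\ell})\sqrt{2K\log L}$ to recover the advertised bound. There is no real obstacle here beyond the bookkeeping of the re-partitioning step: the substantive work was already done in \cref{lem:interval_exp4}, and the only point that needs care is the observation that because $z_t$ is fixed on each $I_s$, taking the per-block maximum over $z$ coincides with the per-segment maximum, so the left-hand side is unchanged by the refinement.
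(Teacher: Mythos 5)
Your proposal is correct and follows essentially the same route as the paper: refine the stationary segments into sub-intervals of length at most $\ell$, apply \cref{lem:interval_exp4} to each (using that it holds for any starting history, so no reset is needed), count at most $T/\ell + S$ intervals, and simplify. The only cosmetic difference is that you partition each segment independently into $\lceil n_s/\ell\rceil$ blocks, whereas the paper first lays down $T/\ell$ equal intervals and then further splits those containing change-points; both yield the same interval count and the same bound.
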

\begin{proof}
Recall that $S$ is the number of stationary segments within the $T$ rounds, as defined in \cref{sec:setting}. Our goal is to divide the $T$ rounds into stationary intervals of length at most $\ell$, so that we can apply \cref{lem:interval_exp4} on each interval.
We do this as follows. First, we construct $T/\ell$ intervals of length at most $T$. Then, we additionally divide intervals that contain changepoints, so that each interval contains only a single latent state. This leads to at most $T/\ell + S$ stationary intervals. 
Finally, using \cref{lem:interval_exp4} on each interval and summing the regrets the desired result. Note that though we consider $T/\ell + S$ intervals, we only need to consider the best latent sub-policy for each of $S$ stationary segments, as intervals belonging to the same stationary segment have the same optimal sub-policy.
\end{proof}

\regretdecomposition*
\begin{proof}
The regret can be decomposed as follows:
\begin{align*}
    \mathcal{R}(T; \mathcal{E}, \hat{\Pi}) 
    &= \sum_{t = 1}^T \E{z_t, \pi^*_{z_t}}{r_t} - \sum_{t = 1}^T \E{z_t, \mathcal{E}_t}{r_t} \\
    &= \left[\sum_{t = 1}^T \E{z_t, \pi^*_{z_t}}{r_t}
    - \sum_{t = 1}^T \E{z_t, \hat{\pi}_{z_t}}{r_t}\right]
    + \left[\sum_{t = 1}^T \E{z_t, \hat{\pi}_{z_t}}{r_t}
    - \sum_{t = 1}^T \E{z_t, \mathcal{E}_t}{r_t}\right]\,,
\intertext{where we introduce $\hat{\Pi}$ that acts according to the true latent state. Then, recalling there are $S$ stationary segments, the above expression can be further expressed as}
    & \left[\sum_{t = 1}^T \E{z_t, \pi^*_{z_t}}{r_t}
    - \sum_{t = 1}^T \E{z_t, \hat{\pi}_{z_t}}{r_t}\right]
    + \left[\sum_{s = 1}^S \sum_{t = \tau_{s-1}}^{\tau_s - 1} \E{z_t, \hat{\pi}_{z_t}}{r_t}
    - \sum_{t = 1}^T \E{z_t, \mathcal{E}_t}{r_t}\right] \\
    & \quad \leq \left[\sum_{t = 1}^T \E{z_t, \pi^*_{z_t}}{r_t}
    - \sum_{t = 1}^T \E{z_t, \hat{\pi}_{z_t}}{r_t}\right]
    + \left[\sum_{s = 1}^S  \max_{z \in \mathcal{Z}} \sum_{t = \tau_{s-1}}^{\tau_s - 1} \E{z_t, \hat{\pi}_z}{r_t}
     - \sum_{t = 1}^T \E{z_t, \mathcal{E}_t}{r_t}\right],
\end{align*}
where we utilize the fact that each stationary segment has one optimal sub-policy.
\end{proof}

\mainregret*
\begin{proof}
We have the following regret decomposition due to \cref{lem:regret_decomposition},
\begin{align*}
    \mathcal{R}(T; \mathcal{E}, \hat{\Pi}) 
    &\leq \left[\sum_{t = 1}^T \E{z_t, \pi^*_{z_t}}{r_t}
    - \sum_{t = 1}^T \E{z_t, \hat{\pi}_{z_t}}{r_t}\right]
    + \left[\sum_{s = 1}^S  \max_{z \in \mathcal{Z}} \sum_{t = \tau_{s-1}}^{\tau_s - 1} \E{z_t, \hat{\pi}_z}{r_t}
     - \sum_{t = 1}^T \E{z_t, \mathcal{E}_t}{r_t}\right].
\end{align*}

The first term can be bounded using our offline analysis, which shows near-optimality of $\hat{\Pi}$ when the latent state is known. In the case where $z_{1:T}$ is the same both offline and online, we see that for each round $t$, $\E{z_t, \pi^*_{z_t}}{r_t} - \E{z_t, \hat{\pi}_{z_t}}{r_t} = V_t(\pi_{z_t}^*) - V_t(\hat{\pi}_{z_t})$. Hence, the first term is exactly $V(\Pi^*) - V(\hat{\Pi})$ and is bounded by \cref{thm:main} w.p. at least $1 - \delta_1 - \delta_2$.
The second term is the switching regret of Exp4.S, and is bounded by choosing $\ell = T/S$ in \cref{lem:exp4}. Combining the two bounds yields the desired result. 
\end{proof}

\end{document}